%% file: main.tex
\documentclass[12pt]{article}
\usepackage[letterpaper,left=1in,bottom=1in,top=1in,right=1in]{geometry}

\usepackage[utf8]{inputenc}
\usepackage[T1]{fontenc}
\usepackage[english]{babel}

\usepackage{microtype}
\usepackage{csquotes}

\usepackage{setspace}
\usepackage[en-GB]{datetime2}
\DTMlangsetup[en-GB]{showdayofmonth=false}

\usepackage{titlesec}

\titleformat{\part}[frame]
{\normalfont}
{\filright \footnotesize \enspace PART \thepart \enspace}
{8pt}
{\vspace{20pt} \huge \bfseries \filcenter}

\titlespacing*{\part}{0pt}{-50pt}{10pt}

\usepackage{titletoc}

\usepackage{enumitem}

\usepackage{graphicx}
\usepackage{tikz}
\usepackage{caption}
\usepackage{subcaption}
\usepackage{float}

\usepackage{xcolor}

\definecolor{linkred}{RGB}{125,25,50}

\input{math_commands.tex}

\usepackage[numbers]{natbib}
\usepackage{hyperref}
\hypersetup{
    colorlinks=true,
    linkcolor=linkred,
    citecolor=linkred,
    urlcolor=linkred,
    linktocpage=true,
    breaklinks=true,
    pdftitle={Adversarially Robust PAC Learning}
}

\usepackage{xurl}

\usepackage[capitalize,noabbrev]{cleveref}

\newenvironment{keywords}{\vspace{-\baselineskip} \leftskip 30pt \rightskip 30pt \noindent \ignorespaces \small {\bfseries Keywords:}}

\title{Adversarially Robust PAC Learning with Optimal VC Rates}

\author{
    Steve Hanneke\thanks{Department of Computer Sciences, Purdue University, USA. Email: \texttt{steve.hanneke@gmail.com}} \and
    Amirreza Shaeiri\thanks{Department of Computer Sciences, Purdue University, USA. Email: \texttt{amirreza.shaeiri@gmail.com}}
}

\date{\today}

\begin{document}

\maketitle

\thispagestyle{empty}

\begin{abstract}
We study the problem of \emph{adversarially robust} PAC learning. In this framework, the learner observes independent samples from an unknown distribution over $\mathcal{X} \times \{0,1\}$, as in classical PAC learning. However, given a perturbation map $\mathcal{U} : \mathcal{X} \to 2^{\mathcal{X}}$ known to the learner, the goal is to output, with high probability, a predictor that correctly classifies \emph{every} perturbation $z \in \mathcal{U}(x)$ of most future examples $(x,y)$ drawn from the same underlying distribution.

We determine the \emph{optimal} $\mathcal{U}$-independent sample complexity of this problem in both the realizable and agnostic settings. More specifically, for every concept class $\mathcal{H}$ of $\operatorname{VC}$ dimension $d$, we prove upper bounds of $\mathcal{O} \big( d/\epsilon + \log(1/\delta)/\epsilon \big)$ in the realizable setting and $\mathcal{O} \big( d/\epsilon^2 + \log(1/\delta)/\epsilon^2 \big)$ in the agnostic setting, together with an optimal first-order refinement of the latter. These bounds match the corresponding lower bounds for classical PAC learning. Consequently, and perhaps surprisingly, adversarial robustness incurs \emph{no additional} distribution-free statistical cost, uniformly over all perturbation maps. Our bounds improve exponentially on those of [Montasser, Hanneke, and Srebro; COLT '19].


On the technical side, we present short and elementary proofs based on a new algorithmic principle that we call \emph{binomial-bagging}. We believe that binomial-bagging and its analysis may be of independent interest.\\
\end{abstract}

\begin{keywords}
Adversarially Robust PAC Learning, Bagging, Sample Complexity
\end{keywords}

\clearpage

\setcounter{page}{1}
\setcounter{tocdepth}{2}\tableofcontents

\clearpage

\input{Main/Introduction}

\input{Main/Preliminaries}

\input{Main/Proofs}

\input{Main/Acknowledgments}

\section*{AI Disclosure}
The main technical ideas underlying this work emerged through an extensive iterative process involving the authors, ChatGPT 5.6 Sol Pro, and Fable 5. The manuscript was subsequently written by the authors with additional assistance from ChatGPT 5.6 Sol and Claude Opus 5. The authors take full responsibility for all content in the paper.

\clearpage

\bibliography{References}
\addcontentsline{toc}{section}{References}

\end{document}

%% file: math_commands.tex
\usepackage{amsmath}
\allowdisplaybreaks[4]
\usepackage{amssymb}
\usepackage{mathtools}
\usepackage{amsthm}
\usepackage{thmtools}
\usepackage{etoolbox}

\usepackage{amsfonts}
\usepackage{mathrsfs}
\usepackage{dsfont}
\usepackage{bm}

\usepackage{relsize}
\usepackage{nicefrac}
\usepackage{centernot}

\usepackage{accents}

\usepackage{algorithm}

\usepackage{algpseudocode}

\theoremstyle{plain}
\newtheorem{theorem}{Theorem}
\newtheorem{lemma}[theorem]{Lemma}

\theoremstyle{definition}
\newtheorem{definition}[theorem]{Definition}

\theoremstyle{remark}

\AtBeginEnvironment{proof}{\setcounter{localclaim}{0}}



\DeclareMathOperator*{\argmin}{arg\,min}

\usepackage{tcolorbox}

\tcbset{
    rounded corners,
    colback = white,
    before skip = 0.25cm,
    after skip = 0.5cm,
    boxrule = 2pt,
    arc = 10pt
}

\usepackage{tikz}

\newcommand{\circled}[1]{
  \tikz[baseline=(char.base)]{
    \node[
      shape=circle,
      draw,
      inner sep=0.4pt,
      minimum size=1.2em
    ] (char) {\strut #1};
  }
}

\usepackage[framemethod=tikz]{mdframed}

\newmdenv[
  linewidth=0.6pt,
  linecolor=black!65,
  roundcorner=2pt,
  skipabove=12pt,
  skipbelow=12pt,
  innerleftmargin=9pt,
  innerrightmargin=9pt,
  innertopmargin=7pt,
  innerbottommargin=8pt
]{theoremframe}



%% file: Main/Introduction.tex
\section{Introduction} \label{sec:introduction}

We prove that adversarial robustness incurs no additional distribution-free statistical cost: even in the worst case over perturbation maps, the optimal sample complexities of adversarially robust PAC learning in both the realizable and agnostic settings match those of ordinary PAC learning, up to universal constant factors.

Notably, concurrent and independent work by \cite{montasser2026baggingrobustlylearnsvc} obtained the same result in the realizable setting slightly earlier, albeit using a different technique. We refer the reader to \autoref{sec:concurrent-work} for a comprehensive discussion.

\paragraph{Background.} The phenomenon of test-time robustness of machine learning models has been studied for more than two decades \cite{dalvi2004adversarial, lowd2005adversarial, globerson2006nightmare, kolcz2009feature}. However, the seminal works of \cite{szegedy2013intriguing, goodfellow2015explaining}, which studied this topic for image classification under the name of \emph{adversarial examples}, can be considered the beginning of significant attention to this field, especially in the context of deep learning.~\footnote{See the figure at the following \href{https://nicholas.carlini.com/writing/2019/all-adversarial-example-papers.html}{link} for an illustration of the approximate cumulative number of papers published on related topics over time.} Since then, \emph{adversarial robustness} has been studied in a wide range of domains, including voice recognition \cite{carlini2016hidden, carlini2018audio} and reinforcement learning \cite{lin2017tactics, tretschk2018sequential}. More recently, with the emergence of large language models, adversarial attacks have taken new forms, including adversarial prompts and jailbreak attacks \cite{perez2022ignore, greshake2023not, wei2023jailbroken, zou2023universal}.

Substantial effort has been devoted to defending models against adversarial attacks, including various heuristic and provable defenses. Many proposed heuristic defenses were subsequently broken by stronger or adaptive attacks \cite{carlini2016defensive, carlini2017magnet, carlini2017evaluating, athalye2018robustness, athalye2018obfuscated, carlini2019ami, carlini2019evaluating, tramer2020adaptive}, while provable defenses have historically faced substantial challenges in scaling to sufficiently expressive networks and large-scale datasets such as ImageNet \cite{pmlr-v97-cohen19c}. Among existing approaches, \emph{adversarial training} \cite{goodfellow2015explaining, kurakin2016adversarial, madry2017towards} has arguably achieved the greatest practical success. In addition, adversarial robustness appears with a variety of unexpected benefits, significantly increasing model interpretability \cite{tsipras2019robustness, zhang2019interpreting} \cite{kaur2019perceptuallyaligned}, better model transferability \cite{salman2020adversarially}, and utility in a diverse range of computer vision tasks \cite{santurkar2019image, engstrom2019adversarial, salehi2020arae}.~\footnote{The references cited in the first two paragraphs of the introduction are intended to provide a selective, rather than comprehensive, account of the literature.}

\paragraph{Formal framework.} The focus of this work is the formal study of adversarial robustness within the framework of \emph{probably approximately correct (PAC) learning}. The seminal works of \cite{vapnik1974theory} and \cite{valiant1984theory} laid the foundations of modern statistical learning theory by introducing this framework, which has become a central theoretical criterion for supervised learning and has been the focus of extensive research over the past several decades.

We next present the fundamental framework of adversarially robust PAC learning studied in this work. In this framework, there is an unknown data distribution $\mathcal{D}$ over $\mathcal{X} \times \mathcal{Y}$, where $\mathcal{X}$ is an instance space and $\mathcal{Y}=\{0,1\}$ is a binary label space.~\footnote{For clarity of exposition, we omit standard set and measure theoretic qualifications in the introduction. Formal definitions are provided in \autoref{sec:preliminaries}.} For example, $\mathcal{X}$ may be a space of medical images, with $y=1$ indicating that the corresponding image shows a certain abnormality and $y=0$ indicating that it does not. The framework also fixes a non-empty concept class $\mathcal{H}$ consisting of functions from $\mathcal{X}$ to $\mathcal{Y}$, which serves as a benchmark family of classifiers. For instance, $\mathcal{H}$ may be the class of neural networks with a prescribed architecture.

Adversarial robustness is modeled by a known perturbation map $\mathcal{U} : \mathcal{X} \to 2^{\mathcal{X}}$, where $\mathcal{U}(x)$ is the set of allowable perturbations of an instance $x$. For image classification, a standard choice is $\mathcal{U}(x) = \{ z \in \mathcal{X} \; | \; \lVert z-x \rVert_{\infty} \leq \rho \}$, where $\rho > 0$ is a small radius. This permits each pixel value to change by at most $\rho$ and is commonly used to obtain visually imperceptible perturbations of an image. Based on that, for a classifier $f : \mathcal{X} \to \{0,1\}$ and an example $Z = (x,y)$, write $\ell_{\mathcal{U}}(f;Z) := \mathds{1} \bigl\{ \exists{z \in \mathcal{U}(x)} \; \text{such that} \; f(z) \neq y \bigr\}$. Now, given $m$ independent samples drawn from $\mathcal{D}$, the learner’s goal is to output a hypothesis $\hat{h} : \mathcal{X} \to \mathcal{Y}$ whose robust error $R_{\mathcal{U},\mathcal{D}}(\hat{h}) := \mathbb{E}_{Z \sim \mathcal{D}} \, \bigl[\ell_{\mathcal{U}}(\widehat{h};Z)\bigr]$, with high probability over the observed samples, is close to that of the best concept in $\mathcal{H}$, namely $R_{\mathcal{U},\mathcal{D}}^\star(\mathcal{H}) := \inf_{h \in \mathcal{H}} R_{\mathcal{U},\mathcal{D}}(h)$. Moreover, we study both the \emph{realizable setting}, where $R_{\mathcal{U},\mathcal{D}}^\star(\mathcal{H}) = 0$, and the \emph{agnostic setting}, where no such assumption is imposed.

\paragraph{The central question.} This leads to the following natural and foundational question:

\begin{center}
    \textit{Given a concept class $\mathcal{H} \subseteq \{0,1\}^{\mathcal{X}}$, how many samples are \textbf{necessary} and \textbf{sufficient} for adversarially robust PAC learning of $\mathcal{H}$, uniformly over all perturbation maps $\mathcal{U}$?}
\end{center}

\paragraph{Prior work.} This question was first studied by \cite{pmlr-v99-montasser19a}. Although the empirical risk minimizer (ERM) yields a proper learning rule with essentially optimal sample complexity in classical PAC learning, they showed that its natural robust analogue, the robust empirical risk minimizer (RERM), can fail even in the realizable setting. Notably, RERM provides a formalization of adversarial training. More strongly, they exhibited a class of $\operatorname{VC}$ dimension one and a perturbation map under which no proper learner—that is, no learner whose output is constrained to lie in the class—can adversarially robustly PAC learn. Nevertheless, they proved that every $\operatorname{VC}$ class is adversarially robustly PAC learnable under every perturbation map via an improper boosting-based learner. For a class of $\operatorname{VC}$ dimension $d$, their bounds, up to logarithmic factors, depend on the class complexity through $d\cdot d^\star$, where $d^\star$, the dual $\operatorname{VC}$ dimension of the class, can be exponential in $d$ \cite{assouad1983densite, kleer2023primal}. Several subsequent works further investigated this problem. Most notably, \cite{pmlr-v151-montasser22a, montasser2022adversarially} introduced the global one-inclusion graph—an extension of the one-inclusion framework that underlies many recent developments in the theory of PAC learning—and an associated combinatorial complexity parameter, obtaining a minimax-optimal adversarially robust PAC learner for each pair $(\mathcal{H}, \mathcal{U})$. They conjectured that this complexity measure can be upper bounded by $\operatorname{VC}(\mathcal{H})$, uniformly over all perturbation maps $\mathcal{U}$. We also refer interested readers to the works of \cite{pmlr-v134-montasser21a, montasser2020reducing, attias2019improved, attias2022characterization, attias2023adversarially, pmlr-v272-attias25a}, which are closely related to the broader themes studied here, although their connection to our results is indirect.

\paragraph{A distinction from classical learning.} The above failures indicate that adversarially robust PAC learning cannot be understood by merely transplanting the classical theory; rather, it calls for genuinely new theoretical ideas. Moreover, empirical findings reinforce this message: smooth activation functions such as Swish/SiLU can outperform the widely used ReLU activation function under adversarial training \cite{xie2020smooth, gowal2020uncovering}, and continued adversarial training can decrease the robust training loss while worsening robust generalization, a phenomenon known as \emph{robust overfitting} \cite{rice2020overfitting}.

\paragraph{Our contributions.} We prove \emph{optimal} upper bounds on the $\mathcal{U}$-independent sample complexities of this problem in both the realizable and agnostic settings. Formally, we establish the following theorem.

\begin{theorem}[Main result] \label{theorem:introduction_1}
Let $\mathcal{H} \subseteq \{0,1\}^{\mathcal{X}}$ be a concept class with $d := \operatorname{VC}(\mathcal{H}) < \infty$, and let $\mathcal{U} : \mathcal{X} \to 2^{\mathcal{X}}$ be any perturbation map. Then, there exist a universal constant $C>0$ and an improper learning algorithm such that, for every distribution $\mathcal{D}$ over $\mathcal{X} \times \{0,1\}$, every sample size $m \geq 1$, and every $\delta \in (0,1)$, given an i.i.d.\ sample $S \sim \mathcal{D}^{m}$, the learner outputs a classifier $\widehat{h} : \mathcal{X} \to \{0,1\}$ satisfying, with a probability of at least $1-\delta$,
\[
    R_{\mathcal{U},\mathcal{D}}(\widehat{h}) \leq R_{\mathcal{U},\mathcal{D}}^\star(\mathcal{H}) + C \left( \sqrt{\frac{R_{\mathcal{U},\mathcal{D}}^\star(\mathcal{H}) \bigl(d + \log(1/\delta) \bigr)}{m}} + \frac{d + \log(1/\delta)}{m} \right).
\]
\end{theorem}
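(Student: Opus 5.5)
The plan is to build the learner from a \emph{binomial-bagging} ensemble on top of robust ERM: draw many random subsamples of $S$, run robust ERM (or, in the agnostic case, a robust ERM on a suitably filtered subsample) on each, and output the majority vote. Throughout we use the robust loss $\ell_{\mathcal{U}}$ only through its evaluation on sample points, so the analysis never touches the dual VC dimension. In binomial-bagging each example is kept independently with probability $1/2$, so each bag has a $\mathrm{Bin}(m,1/2)$ size rather than a fixed one.

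\textbf{Realizable case ($R^\star=0$).} Fix a test point $(x,y)$. The majority vote is robustly wrong at $(x,y)$ only if some $z\in\mathcal{U}(x)$ is misclassified by at least half of the bagged predictors. The key step is a ``leave-one-in'' exchangeability argument. Conditioned on the bags, including the fresh point $(x,y)$ in a bag is distributed like a binomial bag drawn from $m+1$ points. We then compare against the event that a robust-ERM run on a sample \emph{containing} $(x,y)$ errs robustly on $(x,y)$, which is impossible by consistency. This converts the robust error into a count of how many of the $m+1$ points are ``hard'' for the ensemble. That count is bounded with a VC argument on the \emph{non-robust} class restricted to finitely many points. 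For each bag, the relevant set of points is finite, so Sauer's lemma with dimension $d$ applies. Chaining these steps would give $O((d+\log(1/\delta))/m)$ with no log factor: expectation via exchangeability, then high probability via a Bernstein/median-of-independent-runs step.

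\textbf{Agnostic case.} We reduce agnostic learning to realizable learning by the standard approach. First, find on $S$ a maximal subsample that is robustly realizable by $\mathcal{H}$; this is equivalent to a robust ERM achieving empirical robust risk $\hat R$. Then run the realizable binomial-bagging learner on it. The error splits into two parts. The first is the mass of points discarded, controlled by $R^\star + O(\sqrt{R^\star(d+\log(1/\delta))/m} + (d+\log(1/\delta))/m)$ through relative-deviation (Bernstein-type) concentration of the empirical robust risk of the single best $h^\star$; no uniform convergence over $\mathcal{H}$ is needed, only the fixed $h^\star$. The second is the generalization error of the realizable learner on the retained points. To avoid paying $\log m$ in the compression-style argument, we would apply the realizable leave-one-in bound with error proportional to $(d+\log(1/\delta))/m$ relative to a distribution conditioned on the ``clean'' region, and then combine the two parts.

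\textbf{Main obstacle.} The hardest step is the realizable expectation bound. We must show that a majority over binomial bags is robustly correct on a fresh point without a union bound over $\mathcal{U}(x)$, which may be infinite and have unbounded dual complexity. The approach to try first is a charging argument. If $z\in\mathcal{U}(x)$ fools a majority of bags, then any bag containing $(x,y)$ outputs a predictor that is correct on all of $\mathcal{U}(x)$, including $z$. So among the bags, the ones that err at $z$ are exactly those missing $(x,y)$, and there must be many of them. This forces $(x,y)$ to be ``pivotal'' for a large fraction of the binomial-bag distribution. The number of pivotal points among the $m+1$ is then bounded by $O(d)$ in expectation, using a one-inclusion/Sauer-type bound on the projections of the consistent bag outputs.
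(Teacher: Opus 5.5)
Your realizable sketch follows the paper's idea: a witness $z_i\in\mathcal U(x_i)$ for each hard point, bags containing $i$ are robustly correct at $z_i$, and Sauer's lemma on the trace of $\mathcal H$ on finitely many witnesses. The agnostic step, however, has a genuine gap.

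The retained subsample $S'$ is selected by $\mathsf{RERM}(S)$, which depends on all of $S$. So $S'$ is not an i.i.d.\ sample from any fixed realizable distribution, and there is no fixed ``clean region'' to condition on:
\begin{itemize}
\item The region where $h^\star$ is robustly correct is not the one you filtered by.
\item The region where $\mathsf{RERM}(S)$ is robustly correct has complement of mass $R_{\mathcal U,\mathcal D}(\mathsf{RERM}(S))$. Nothing in your argument controls this, and RERM can fail even in the realizable case \cite{pmlr-v99-montasser19a}.
\item Exchangeability between filtered training points and an unfiltered test point fails.
\item For a test point in the noisy part, adding it to a bag can make the bag non-realizable, so bags containing it need not be correct on $\mathcal U(x)$.
\end{itemize}
Your Bernstein bound for the fixed $h^\star$ only controls the \emph{empirical} fraction of discarded points. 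It yields $\sqrt{R^\star\log(1/\delta)/m}$, with no $d$ under the square root. If your two-part split were valid, you would get $R^\star+\mathcal O\bigl(\sqrt{R^\star\log(1/\delta)/m}+(d+\log(1/\delta))/m\bigr)$. That is ruled out by the classical agnostic lower bound (take $\mathcal U(x)=\{x\}$), which forces a $\sqrt{R^\star d/m}$ term. That term is exactly the generalization cost of the data-dependent filtering, which your split ignores. The standard analysis of ``filter, then run a realizable learner'' goes through sample compression. This is where the $\log m$ factors and the loss of the first-order refinement come from, as noted in \autoref{sec:concurrent-work}.

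The missing idea is to skip the reduction. Keep the real-valued score $b_S$ trained on the full noisy sample, and prove a deterministic agnostic leave-one-out bound
\[
    \sum_{i}\bigl(\overline\ell_{\mathcal U}(b_{S^{-i}};Z_i)-\alpha\bigr)_+\leq r(S)+\frac{Cd}{\alpha}.
\]
Write $p_i:=\overline\ell_{\mathcal U}(b_{S^{-i}};Z_i)$ and $I:=\{i:p_i>\alpha\}$, and fix worst-case witnesses $z_i$. For a uniformly random $A$:
\begin{itemize}
\item The expected errors of $\mathsf{RERM}(S_A)$ at retained witnesses total at most $r(S)/2$, by comparing with $\mathsf{RERM}(S)$ on $S_A$.
\item Those at omitted witnesses total $\tfrac12\sum_{i\in I}p_i$.
\item The difference is bounded by a Rademacher average of the error trace of $\mathcal H$ on the fixed witnesses, which is at most $C\sqrt{d|I|}$.
\end{itemize}
Truncating at $\alpha$ turns this into $Cd/\alpha$, and no uniform convergence of the robust loss is ever needed. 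The paper then:
\begin{itemize}
\item averages the prefix scores;
\item reads the prefix losses forwards as a martingale, and backwards through an exchangeable reverse filtration whose conditional means are leave-one-out averages;
\item passes to $g_S$ by convexity of the truncated loss;
\item learns a threshold on a held-out block with a first-order one-dimensional bound;
\item optimizes $\alpha$ only in the analysis.
\end{itemize}

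Separately, your realizable high-probability step is too weak. Independent runs plus selection give $\mathcal O(d\log(1/\delta)/m)$, not the additive $\mathcal O((d+\log(1/\delta))/m)$. The additive form needs the prefix-averaging argument of \cite{aden2023optimal}, which relies on the leave-one-out bound being deterministic and the rule being symmetric.
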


Next, we show two corollaries of the above theorem. In particular, in the realizable setting, for every $\epsilon, \delta \in (0,1)$,
\[
    m \in \mathcal{O} \left( \frac{d + \log(1/\delta)}{\epsilon} \right)
\]
i.i.d.\ samples suffice for an improper learner to output, with a probability of at least $1-\delta$, a classifier of robust error of at most $\epsilon$, uniformly over all perturbation maps $\mathcal{U}$. More generally, in the agnostic setting, for every $\epsilon, \delta \in (0,1)$,
\[
    m \in \mathcal{O} \left( \frac{d + \log(1/\delta)}{\epsilon^2} \right)
\]
i.i.d.\ samples suffice for an improper learner to output, with a probability of at least $1-\delta$, a classifier of robust error of at most $R_{\mathcal{U},\mathcal{D}}^\star(\mathcal{H}) +\epsilon$, uniformly over all perturbation maps $\mathcal{U}$.

Notably, all three bounds are optimal up to universal constant factors, as taking the identity perturbation map $\mathcal{U}(x) = \{x\}$ reduces adversarially robust PAC learning to its classical counterpart, for which matching lower bounds are known (for non-trivial concept classes).


Remarkably, all of our results are obtained from a simple algorithmic principle inspired by the classical method of bagging, which we call \emph{binomial bagging}. This improper learning principle trains robust empirical risk minimizers on sub-samples formed by retaining each training example independently with probability $1/2$, and aggregates the resulting predictions pointwise. For the expectation guarantees, we threshold the binomial-bagging score at $1/2$ in the realizable setting, whereas in the agnostic setting, we learn the threshold using an independent validation sample. To obtain high-probability guarantees, we apply binomial bagging to a range of sample prefixes: in the realizable setting, we threshold each prefix score at $1/2$ and take a majority vote over the resulting classifiers; in the agnostic setting, we instead average the prefix scores and learn a single threshold again using an independent validation sample. Furthermore, the corresponding proofs are surprisingly short and elementary. We believe that both the binomial-bagging principle and its analysis may be of independent interest. We refer the reader to \autoref{sec:implications} for further discussion.

\paragraph{Broader impact.} Perhaps surprisingly, our results show that adversarial robustness incurs no additional distribution-free statistical cost, uniformly across all perturbation maps. This is particularly striking given that an additional cost has been established in a distribution-specific framework of \cite{schmidt2018adversarially}, and substantial robust generalization gaps have been observed empirically for adversarial training \cite{tsipras2019robustness, rice2020overfitting}. In light of the fact that our algorithms use robust empirical risk minimizers—the formalization of adversarial training—entirely as a black box, a particularly compelling direction for future work is to translate the insights from our work into variants of adversarial training with improved robust generalization.

Additionally, one possible interpretation of the binomial bagging principle is as a form of \emph{overparameterization}: instead of enlarging a single model, it aggregates RERMs trained on different sub-samples. This echoes laws showing that additional parameterization can be necessary for smooth robust interpolation \cite{bubeck2021law, bubeck2021universal}, and recent work connecting these laws to robust generalization \cite{more2026order}. It also resonates with the equivalence, established for overparameterized random-feature models, between ensembles and single wider models \cite{dern2025theoretical}. Although no formal equivalence follows, these connections suggest that aggregation and overparameterization may be related ways of supplying effective capacity for robust generalization.

\subsection{Technical Overview} \label{sec:technical-overview}

In this subsection, we convey the main ideas of our approach by presenting a direct and complete proof of the optimal realizable guarantee. We then discuss the argument and explain how the same algorithmic principle yields the optimal first-order agnostic guarantee in \autoref{theorem:introduction_1}. For simplicity, we assume that the infimum in the definition of the realizable setting is attained; the same argument extends easily to the more general case.

\paragraph{Robust empirical risk minimization.} A robust empirical risk minimizer (RERM) is a mapping that assigns to every finite multiset of instance-label pairs $S$ a hypothesis $\hat{h} \in \mathcal{H}$ satisfying
\begin{equation*}
    \hat{h} \in \argmin_{h \in \mathcal{H}} \sum_{Z \in S} \ell_{\mathcal{U}}(h;Z),
\end{equation*}
where the sum is taken with multiplicity. We fix a deterministic RERM rule whose input is a finite multiset, including the empty multiset.

\begin{figure}[!htb]
\captionsetup{skip=5pt}
\centering
\begin{tcolorbox}

\begin{center}
\textbf{Exact Binomial Bagging of Robust ERMs}
\end{center}

\noindent \textbf{Input:}
A sample
\[
    S = (Z_1, Z_2, \ldots, Z_m) \in (\mathcal{X} \times \mathcal{Y})^m.
\]

\begin{enumerate}
    \item For every index set $A \subseteq [m]$, let
    \[
        S_A := (Z_i)_{i \in A} \qquad \text{and} \qquad h_A := \mathsf{RERM}(S_A).
    \]

    \item Define the binomial-bagging score
    \[
        b_S(x) := \frac{1}{2^m} \sum_{A \subseteq [m]} h_A(x), \qquad x \in \mathcal{X},
    \]
    \item Return the classifier $\operatorname{BB}_S : \mathcal{X} \rightarrow \mathcal{Y}$ such that for every $x \in \mathcal{X}$, we have
    \[
        \operatorname{BB}_S(x) := \mathds{1} \{ b_S(x) \geq 1/2 \}.
    \]
\end{enumerate}

\end{tcolorbox}
\caption{Exact binomial bagging. Equivalently, $A$ is generated by retaining each sample occurrence independently with probability $1/2$, and the expectation over this random mask is evaluated exactly.} \label{alg:binomial-bagging}
\end{figure}

\paragraph{Leave-one-out errors.} Fix a robustly realizable sample
\begin{equation*}
    S = \big( (x_1, y_1), (x_2, y_2), \ldots, (x_m, y_m) \big) \in (\mathcal{X} \times \mathcal{Y})^{m},
\end{equation*}
of size $m$ for some $m \in \mathbb{N}$, and for every $i \in [m]$, let $S^{-i}$ denote the multiset obtained by deleting the $i$-th occurrence. Define the set of leave-one-out robust mistakes
\begin{equation*}
    E(S) := \big\{ i \in [m] \; \big| \; \ell_{\mathcal{U}} \big( \operatorname{BB}_{S^{-i}}; (x_i,y_i) \big) = 1 \big\},
\end{equation*}
and let $q := |E(S)|$.

Next, for the remainder of the argument, for each $i \in E(S)$, we fix a witness $z_i \in \mathcal{U}(x_i)$ certifying the error, so that
\begin{equation*}
    \operatorname{BB}_{S^{-i}}(z_i) \neq y_i.
\end{equation*}
Indeed, such a $z_i$ must exist; otherwise, $\operatorname{BB}_{S^{-i}}(z) = y_i$ for every $z \in \mathcal{U}(x_i)$, contradicting $i \in E(S)$.

\begin{lemma}[Realizable leave-one-out bound]\label{lem:realizable-loo}
Let $d:=\operatorname{VC}(\mathcal H)<\infty$. For every robustly
realizable sample $S$ as above, binomial bagging satisfies
\begin{equation*}
    |E(S)|
    =\sum_{i=1}^m\ell_{\mathcal U}
       \bigl(\operatorname{BB}_{S^{-i}};(x_i,y_i)\bigr)
    \leq60d.
\end{equation*}
\end{lemma}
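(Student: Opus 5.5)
The plan is a double-counting argument over random sub-samples. A lower bound comes from the leave-one-out mistakes. An upper bound comes from a VC-type estimate on how many points a hypothesis in $\mathcal H$ can get wrong while being correct on a uniformly random half of them.

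\emph{Step 1 (mistakes force many erring sub-samples).} Fix $i\in E(S)$ and its witness $z_i$. Since $\operatorname{BB}_{S^{-i}}(z_i)\neq y_i$ and the threshold is at $1/2$, at least half of the sets $A\subseteq[m]\setminus\{i\}$ satisfy $h_A(z_i)\neq y_i$. (When $y_i=0$ this is immediate from $b_{S^{-i}}(z_i)\ge 1/2$. When $y_i=1$ we get $b<1/2$, so more than half of the $h_A$ are wrong.) On the other hand, $S$ is robustly realizable, so every $S_A$ is too, and $\mathsf{RERM}$ has zero robust loss on it. Hence $h_{A}(z_j)=y_j$ whenever $j\in A\cap E(S)$, because $z_j\in\mathcal U(x_j)$. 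Now draw $A\subseteq[m]$ uniformly, i.e.\ keep each index with probability $1/2$. Conditioned on $i\notin A$, the set $A$ is uniform on subsets of $[m]\setminus\{i\}$. Summing over $i\in E(S)$ gives
\[
\mathbb E_A\Bigl[\,\bigl|\{j\in E(S)\setminus A:\ h_A(z_j)\neq y_j\}\bigr|\,\Bigr]\ \ge\ \tfrac12\cdot\tfrac12\cdot q=\tfrac q4 .
\]

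\emph{Step 2 (reduce to a VC statement on the $q$ witness points).} Condition on $B:=A\setminus E(S)$ and write $T:=A\cap E(S)$, which is uniform over subsets of $E(S)$. For each such $B$, the restriction of $h_{B\cup T}$ to $\{z_j\}_{j\in E(S)}$ is a pattern in $\mathcal H|_{\{z_j\}}$ with two properties. Its error set $D:=\{j\in E(S): h(z_j)\neq y_j\}$ is disjoint from $T$. It also belongs to the family $\mathcal D:=\{D(h):h\in\mathcal H\}$ of error sets with respect to the fixed labeling $(y_j)$. Since $D(h)$ is the symmetric difference of the positive set of $h$ with a fixed set, $\mathcal D$ has VC dimension at most $d$. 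So the quantity in Step 1 is at most
\[
\mathbb E_T\Bigl[\ \sup\{|D|:\ D\in\mathcal D,\ D\cap T=\emptyset\}\Bigr].
\]
It therefore suffices to show that this expectation is at most $15d$, whence $q\le 60d$.

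\emph{Step 3 (main obstacle: a log-free bound on the largest class member avoiding a random half).} The naive union bound charges each set of size $k$ in $\mathcal D$ a probability $2^{-k}$ of avoiding $T$, against Sauer's count $(eq/d)^d$. This only gives $O(d\log(q/d))$, which is too weak. My first attempt is a peeling argument over dyadic scales $k\in[2^r,2^{r+1})$. At each scale I would replace $\mathcal D$ by a Haussler packing at Hamming radius $\asymp k/8$, which has at most $(Cq/k)^{d}$ elements. Avoiding $T$ then needs a packing center $D'$ of size $\gtrsim k$ with $|D'\cap T|\le k/8$, which has probability $e^{-ck}$ by Chernoff.

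The $\log(q/k)$ factor from the packing size is what remains to be killed. To do so I would also exploit that $T$ is a uniform half of a set of size $q$ from which nothing else is known. Concretely, I would first restrict to a random sub-population of size $\Theta(k)$ before packing, so that the packing is taken over $O(k)$ points and its cardinality is $(C')^{d}$, independent of $q$. This yields $\Pr[\sup|D|\ge k]\le e^{O(d)-ck}$. Summing the tail over $k$ gives $O(d)$, and tracking constants to land at $15d$ is routine but fiddly.

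If this route stalls, the fallback is Haussler's one-inclusion edge-density bound. View $T\mapsto h_{B\cup T}|_{\{z_j\}}$ on the hypercube $\{0,1\}^{E(S)}$ and count error edges $(T,T\cup\{j\})$. Each such edge is a hypercube edge along which the pattern flips coordinate $j$ from wrong to right, and the goal would be to charge these to edges of the one-inclusion graph of $\mathcal H|_{\{z_j\}}$, whose density is at most $d$.
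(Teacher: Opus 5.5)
\textbf{There is a genuine gap in Step~3.} Steps~1 and~2 are sound. They are an expectation form of the paper's Step~1, combined with the paper's observation that retained indices are never errors. The problem is that Step~3 sets out to prove something false, and the route you dismissed is the one that works.

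No bound of the form $\mathbb E_T[\sup\{|D| : D\in\mathcal D,\ D\cap T=\emptyset\}]\le Cd$ can hold uniformly in $q$. Here is a counterexample.
\begin{itemize}
\item Let $\mathcal H$ be indicators of intervals in $\mathbb R$, and place the witnesses at $1,\ldots,q$ with all labels $0$. Then $\mathcal D$ is the family of discrete intervals in $[q]$, which has VC dimension $2$.
\item The largest member of $\mathcal D$ avoiding a uniformly random half $T$ is the longest run in $q$ fair coin flips. Its expectation grows like $\log_2 q$.
\end{itemize}
Hence the $q$-free tail bound $\mathbb P[\sup|D|\ge k]\le e^{O(d)-ck}$ is false. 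The subsample-then-pack step cannot be made to work: a fixed $D$ of size $k$ meets a random subpopulation of size $\Theta(k)$ in only about $k^2/q$ points. The one-inclusion fallback fails for the same reason. As you describe it, it uses only $D(T)\in\mathcal D$ and $D(T)\cap T=\emptyset$, and taking $D(T)$ to be the longest avoiding run satisfies both.

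\textbf{The fix: no $q$-free bound is needed, because $q$ also sits on the left-hand side.} The union bound you called too weak already suffices. Combined with Sauer's lemma, $M:=|\mathcal D|\le(eq/d)^d$ for $q\ge d\ge1$, it gives
\[
\mathbb E_T\Bigl[\sup\{|D| : D\in\mathcal D,\ D\cap T=\emptyset\}\Bigr]\le\sum_{k\ge1}\min\{1,M2^{-k}\}\le\log_2 M+2 .
\]
So Steps~1 and~2 yield $q/4\le d\log_2(eq/d)+2$. Set $x:=q/d$ and use $d\ge1$; this gives $x/4\le\log_2(ex)+2$. That inequality fails for every $x\ge60$, since the difference $x/4-\log_2(ex)$ is increasing there and already exceeds $2$ at $x=60$. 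Hence $q<60d$. The case $d=0$ forces $q=0$ directly, since all hypotheses then coincide.

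\textbf{Relation to the paper's proof.} This is precisely the paper's mechanism.
\begin{itemize}
\item Its Step~1 shows that at least $2^m/16$ subsets err on at least $q/5$ witnesses.
\item Its Step~2 pigeonholes these subsets into at most $M$ error patterns. Each pattern can be produced only by the at most $2^{m-q/5}$ subsets that omit its error coordinates.
\item This gives $2^{q/5}\le16M\le16(eq/d)^d$: linear in $q$ on the left, logarithmic in $q$ on the right.
\end{itemize}
With this repair, your argument becomes a tail-integrated version of the paper's single-threshold count.
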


\begin{proof}
The case $q=0$ is immediate, so assume $q>0$.

\paragraph{Proof roadmap.} The heart of the proof consists of two counting steps. In each step, we introduce one quantity and compare a lower bound with an upper bound on it. The second comparison yields $q\leq60d$.

\paragraph{Step 1: many subsets make many errors.} For every $A \subseteq [m]$, define
\begin{equation*}
    e(A) := \left| \left\{ i \in E(S) \mid h_A(z_i) \neq y_i \right\} \right|,
\end{equation*}
and let
\begin{equation*}
    T := \sum_{A \subseteq [m]} e(A)
\end{equation*}
be the total number of error incidences $(A,i)$.

\circled{1} Fix $i \in E(S)$. Thus, we know that $\ell_{\mathcal{U}} \big( \operatorname{BB}_{S^{-i}}; (x_i,y_i) \big) = 1$. The $2^{m-1}$ subsets of $[m]$ omitting $i$ are exactly the subsets used to define $\operatorname{BB}(S^{-i})$. Since their majority is wrong at $z_i$, at least half of their RERMs are wrong at $z_i$. Consequently,
\begin{equation*}
    \big| \big\{ A \subseteq [m] \; \big| \; h_A(z_i) \neq y_i \big\} \big| \geq 2^{m-2}.
\end{equation*}
As a result, each of the $q$ hard coordinates contributes at least $2^{m-2}$ incidences. Therefore, we have
\begin{equation}
    T \geq q2^{m-2} = \frac{q}{4} \, 2^m. \label{eq:T-lower}
\end{equation}

\circled{2} Now, define
\begin{equation*}
    \mathcal{G} := \big\{ A \subseteq [m] \; \big| \; e(A) \geq \frac{q}{5} \big\}.
\end{equation*}
Since $e(A) \leq q$ for every $A$, while $e(A) < q/5$ for $A \notin \mathcal{G}$, we have
\begin{equation}
    T \leq q \, |\mathcal{G}| + \frac{q}{5} \, \bigl( 2^m - |\mathcal{G}| \bigr) = \frac{q}{5} \, 2^m + \frac{4q}{5} \, |\mathcal G|. \label{eq:T-upper}
\end{equation}

\circled{3} Combining \eqref{eq:T-lower} and \eqref{eq:T-upper} yields
\begin{equation*}
    \frac{q}{4} \, 2^m \leq \frac{q}{5} \, 2^m + \frac{4q}{5} \, |\mathcal{G}|,
\end{equation*}
and hence
\begin{equation}
    |\mathcal{G}| \geq \frac{2^m}{16}. \label{eq:G-large}
\end{equation}

\paragraph{Step 2: one large error pattern cannot occur too often.} Consider the binary error trace class
\begin{equation*}
    \mathcal{V} := \Big\{ \big( \mathds{1} \{ h(z_i) \neq y_i \} \big)_{i \in E(S)} \; \big| \; h \in \mathcal{H} \Big\} \subseteq \{0,1\}^{q}.
\end{equation*}
The fixed labels $y_i$ only flip the coordinates of the ordinary trace of $\mathcal{H}$, and repeated witnesses cannot increase the shattering dimension. Therefore, $\operatorname{VC}(\mathcal{V}) \leq d$. Thus, by the Sauer--Shelah--Perles lemma \cite{sauer1972density, shelah1972combinatorial}, we get
\begin{equation}
    M := |\mathcal{V}| \leq \sum_{j = 0}^{\min\{d,q\}} \binom{q}{j} \label{eq:number-traces}
\end{equation}

\circled{1} Now, every $A \in \mathcal{G}$ produces one pattern in $\mathcal{V}$. Thus, by \eqref{eq:G-large} and the pigeonhole principle, some fixed pattern $v \in \mathcal{V}$ is produced by at least
\begin{equation}
    F \geq \frac{2^m}{16M} \label{eq:F-lower}
\end{equation}
different subsets $A$.

\circled{2} Since this pattern occurs for a member of $\mathcal{G}$, its number of errors
\begin{equation*}
    r := \sum_{i \in E(S)} v_i
\end{equation*}
satisfies $r \geq q/5$. On the other hand, if $v_i = 1$ and a subset $A$ produces the pattern $v$, then $i \notin A$. Indeed, retaining $(x_i, y_i)$ would force $h_A(z_i) = y_i$ through robust consistency. Thus, every subset producing $v$ must omit the same $r$ specified coordinates. There are at most
\begin{equation}
    F \leq 2^{m-r} \leq 2^{m-q/5} \label{eq:F-upper}
\end{equation}
such subsets.

\circled{3} Comparing \eqref{eq:F-lower} and \eqref{eq:F-upper} gives
\begin{equation*}
    \frac{2^m}{16M} \leq 2^{m-q/5}, 
\end{equation*}
or equivalently,
\begin{equation}
    2^{q/5} \leq 16M. \label{eq:key-counting-inequality}
\end{equation}

\paragraph{Conclusion.} If $d = 0$, all hypotheses in a non-empty binary class coincide, and robust realizability immediately gives $q = 0$. Suppose, therefore, that $d \geq 1$. If $q \leq d$, there is nothing to prove. Otherwise, the standard numerical form of Sauer's lemma and \eqref{eq:key-counting-inequality} give
\begin{equation*}
    2^{q/5} \leq 16 \, \left(\frac{eq}{d}\right)^d.
\end{equation*}
Now, writing $x=q/d$, this inequality cannot hold for $x \geq 60$: the function
\begin{equation*}
    x \longmapsto \frac{2^{x/5}}{ex} 
\end{equation*}
is increasing for $x \geq 60$, and at $x = 60$ it is already larger than $16$. Hence, $q\leq60d$, as claimed.
\end{proof}

\paragraph{Alternative completions of the argument.} The same retained-versus-omitted dichotomy also admits alternative completions using Rademacher complexity or mutual information. Draw a subset $A \subseteq [m]$ uniformly at random. For every hard coordinate $i$, retaining $i$ forces $h_A$ to be correct at the fixed witness $z_i$, whereas, conditional on omitting $i$, it is wrong there with probability at least $1/2$. These properties force a constant positive correlation between each error bit and the random sign encoding its omission. A standard Rademacher bound controls the sum of these correlations and yields $q \in \mathcal{O}(d)$. Alternatively, each error reveals an omitted coordinate, giving a mutual-information lower bound proportional to $q$ between the omission pattern and the error trace. Sauer's lemma bounds the number of possible error traces and hence their entropy, again yielding $q \in \mathcal{O}(d)$. The two-step counting argument above provides a more direct and elementary route to the same conclusion.

\paragraph{From leave-one-out to high probability.} The binomial-bagging rule depends only on the input multiset and is therefore symmetric. As a result, for $d \geq 1$, we apply the prefix-averaging argument of \cite{aden2023optimal} to Lemma~\ref{lem:realizable-loo} to obtain the high-probability guarantee.~\footnote{The martingale argument underlying \cite[Theorem~2.1]{aden2023optimal} applies unchanged here because the robust loss takes values in $[0,1]$ and the leave-one-out bound above is deterministic.} Specifically, write

\begin{equation*}
    S_{\leq t} := (Z_1, Z_2 \ldots, Z_t),
    \qquad
    \mathcal{T} := \{ \lfloor m/4 \rfloor, \lfloor m/4 \rfloor + 1, \ldots, m-1 \},
\end{equation*}
and let $f_t := \operatorname{BB}_{S_{\leq t}}$. Now, with probability at least $1-\delta$, we have
\begin{equation*}
    \frac{1}{|\mathcal{T}|} \sum_{t \in \mathcal{T}} R_{\mathcal U,\mathcal{D}}(f_t) \leq C \, \frac{d+\log(1/\delta)}{m}.
\end{equation*}
We return their pointwise majority,
\begin{equation*}
    \widehat{h}(x) := \mathds{1} \left\{ \frac{1}{|\mathcal{T}|} \sum_{t \in \mathcal{T}}f_t(x) \geq \frac{1}{2} \right\}.
\end{equation*}
Indeed, if $\widehat{h}$ makes a robust error on an example, then some
perturbation is misclassified by at least half of the $f_t$. Consequently,
\begin{equation*}
    \ell_{\mathcal{U}}(\widehat{h};Z) \leq \frac{2}{|\mathcal{T}|} \sum_{t \in \mathcal{T}} \ell_{\mathcal{U}}(f_t;Z).
\end{equation*}
Taking expectations proves the claimed high-probability guarantee. The case $d=0$ is also immediate.

\paragraph{Sparsification and parallelization.} The exact aggregate as in Algorithm~\ref{alg:binomial-bagging} averages over all $2^m$ subsets and is therefore computationally inefficient. However, it can be approximated using only a small number of randomly sampled subsets. In particular, conditional on a fixed sample $S$, let $\mu_S$ denote the distribution on $\mathcal{H}$ obtained by retaining each example independently with probability $1/2$ and returning the corresponding RERM. Draw
\begin{equation*}
    h_1, h_2, \ldots, h_N \stackrel{\mathrm{i.i.d.}}{\sim} \mu_S,
\end{equation*}
and define
\begin{equation*}
    \widetilde{b}_S(x) := \frac{1}{N} \sum_{j=1}^N h_j(x).
\end{equation*}
Let $d^\star := \operatorname{VC} \bigl( \{h \mapsto h(x): x \in \mathcal{X} \} \bigr)$ denote the dual VC dimension of $\mathcal{H}$. Now, standard VC uniform convergence applied to this class implies that for every $\tau, \eta \in (0, 1)$,
$$
    N \geq C \, \frac{d^\star+\log(2/\eta)}{\tau^2}
$$
suffices to guarantee, with probability at least $1-\eta$ over the sampled subsets,
$$
    \sup_{x \in \mathcal{X}} \left| \widetilde b_S(x) - b_S(x) \right| \leq \tau.
$$
Thus, for constant $\tau$, only
$$
    N \in \mathcal{O} \bigl( d^\star+\log(1/\eta) \bigr)
$$
RERM calls suffice to approximate the exact aggregate uniformly. In particular, the resulting majority vote agrees with the exact binomial-bagging vote at every point $x \in \mathcal{X}$ for which
$$
    \left| b_S(x)-\frac{1}{2} \right| > \tau.
$$

Now, fix $\tau=1/4$ and call an example $(x,y)$ bad for a score $b$ if $|b(z)-y| \geq 1/4$ for some $z \in \mathcal{U}(x)$. For each bad leave-one-out example, at least one quarter of the subset classifiers omitting that example misclassify a fixed witness, whereas every subset containing it classifies that witness correctly. The preceding counting argument, with different numerical constants, therefore bounds the number of bad leave-one-out examples by $\mathcal{O}(d)$. Applying the prefix-averaging argument to this binary bad-event loss, with failure probability $\delta/2$, bounds its average population value by $\mathcal{O}((d+\log(1/\delta))/m)$. Next, approximate each prefix score $b_{S_{\leq t}}$, for $t \in \mathcal{T}$, uniformly to accuracy $1/4$, assigning failure probability $\delta/(2|\mathcal{T}|)$ to each approximation. This requires $N \in \mathcal{O}(d^\star+\log(2m/\delta))$ RERM calls per prefix, and a union bound guarantees that all approximations hold simultaneously with probability at least $1-\delta/2$, conditional on $S$. On this event, every robust mistake of an approximate prefix vote implies that the example is bad for the corresponding exact score, since $|\widetilde b(z)-y|\geq1/2$ implies $|b(z)-y|\geq1/4$. Consequently, the same majority-vote inequality shows that the pointwise majority $\widetilde{h}$ of the approximate prefix votes satisfies $R_{\mathcal{U},\mathcal{D}}(\widetilde{h}) \leq C \, (d+\log(1/\delta))/m$ with probability at least $1-\delta$ over both the training sample and the sampled subsets.

Moreover, a practical advantage of binomial bagging is that all constituent RERM calls are independent once the original sample is fixed, and can therefore be carried out in parallel. This applies both to the exact version, which considers all subsets, and to the sparse version above, which uses only randomly sampled subsets. In contrast, the boosting-based improper learner of \cite{pmlr-v99-montasser19a} is sequential, since the distribution used at each round depends on the hypotheses produced in the preceding rounds. More generally, \cite{karbasi2024impossibility} show that, for
boosting algorithms accessing a weak learner as a black box,
substantially reducing the number of sequential rounds can
require exponentially more calls to that learner.

\paragraph{Optimal agnostic rates via binomial bagging with a learned threshold.} In the agnostic setting, a fixed majority threshold is too crude, so we retain the binomial-bagging scores and learn where to threshold them. The analysis has three steps. First, we prove a deterministic leave-one-out bound: the total truncated loss is at most the best achievable number of robust errors on the sample, plus $\mathcal O(d/\alpha)$. Comparing each subset RERM with a RERM on the full sample controls the errors on retained examples, while a standard VC Rademacher bound controls the difference between retained and omitted errors at fixed witnesses. Subtracting a level $\alpha>0$ from each robust score loss and taking the positive part turns the resulting $\sqrt{dm}$ remainder into $\mathcal O(d/\alpha)$, independent of the sample size. Second, we average the scores over a range of sample prefixes. Reading the resulting sequence of losses forwards exhibits each as an unbiased estimate of its population risk, while reading it backwards exposes averages of leave-one-out terms; elementary martingale concentration then yields the desired high-probability control. Third, we learn a single threshold on a held-out block of the sample using \cite[Lemma~3.3]{engelund2026optimal}. Optimizing the proof parameter $\alpha$ only at the end gives the first-order guarantee in \autoref{theorem:introduction_1}; the learner itself never uses $\alpha$. The formal proof appears in \autoref{sec:agnostic-proof}.

\subsection{Further Implications} \label{sec:implications}

In this subsection, we briefly highlight that binomial bagging and its analysis have consequences beyond our main result.

\paragraph{Short proofs of optimal classical PAC bounds.} The special case of the identity perturbation map, $\mathcal{U}(x) = \{x\}$, makes adversarially robust PAC learning equivalent to classical PAC learning. In this case, RERM reduces to ERM, and binomial bagging becomes a simple aggregation of ERMs trained on independently retained sub-samples. Under the realizability assumption, our argument therefore gives a particularly short and elementary proof of the optimal
\begin{equation*}
\mathcal{O} \left( \frac{d + \log(1/\delta)}{\epsilon} \right)
\end{equation*}
sample complexity, first obtained by \cite{hanneke2016optimal}, and subsequently achieved by \cite{larsen2023bagging, aden2023optimal, rawal2026majority, engelund2026optimal}.

\newpage

In the agnostic setting, our argument provides a short and simple route to the optimal first-order bound, by replacing key components of the very recent proof of \cite{engelund2026optimal} while essentially retaining the final part of their argument. We refer interested readers to the works of \cite{engelund2026optimal, hanneke2024revisiting, asilis2026agnostic, devroye1996probabilistic} for a comprehensive discussion on the importance of this problem.

\paragraph{Proper PAC learning and the projection number.}
Let $\mathcal{H}$ be a concept class with $\operatorname{VC}$ dimension $d < \infty$ and projection number $k_p < \infty$. The projection number is the smallest integer $k \geq 2$ such that, for every non-empty finite multiset of hypotheses from $\mathcal{H}$, there exists a hypothesis in $\mathcal{H}$ agreeing with their pointwise majority wherever fewer than a $1/k$ fraction disagree with that majority. The work of \cite{bousquet2020proper} showed that $\mathcal{H}$ admits a proper realizable PAC learner with sample complexity
\begin{equation*}
    \mathcal{O} \left( \frac{k_p^2 \bigl(d\log(k_p)+\log(1/\delta)\bigr)}{\epsilon} \right).
\end{equation*}
We show that binomial bagging combined with this projection property improves the bound to
\begin{equation*}
    \mathcal{O} \left( \frac{k_p d\log(k_p)+\log(1/\delta)}{\epsilon} \right),
\end{equation*}
and that the improved guarantee continues to hold in the adversarially robust setting.

To see the main idea, fix a deterministic projection rule depending only on the input multiset, and apply it to $(h_A)_{A\subseteq[m]}$, retaining all multiplicities. This produces a proper learner. Now, fix a robustly realizable sample $S$ and suppose that, after removing example $(x_i,y_i)$, the resulting hypothesis makes a robust error on this example, witnessed by some $z_i \in \mathcal{U}(x_i)$. Then, at least a $1/k_p$ fraction of the RERMs trained on subsets whose index sets omit $i$ must misclassify $z_i$. Otherwise, their majority prediction at $z_i$ would be $y_i$, with fewer than a $1/k_p$ fraction disagreeing with it, and the projection property would force the projected hypothesis to predict $y_i$ as well.

The counting argument used in the realizable proof extends directly to this smaller fraction. More precisely, if $q$ examples each have a witness that is misclassified by at least a $\tau$ fraction of the RERMs trained without that example, the same two counting steps give
\begin{equation*}
    2^{\tau q / 4} \leq \frac{4}{\tau} \sum_{j=0}^{\min\{d,q\}} \binom{q}{j},
\end{equation*}
and hence
\begin{equation*}
    q \in \mathcal{O} \left( \frac{d}{\tau} \, \log\frac{2}{\tau} \right).
\end{equation*}
Taking $\tau=1/k_p$ therefore gives a deterministic leave-one-out bound of
\begin{equation*}
    \mathcal{O} \bigl( k_p d\log(k_p) \bigr).
\end{equation*}

Finally, this leave-one-out bound can be converted to high probability while preserving properness. We apply the same high-probability conversion to obtain a collection of proper hypotheses whose average robust error is
\begin{equation*}
    \mathcal{O} \left( \frac{k_p d\log(k_p)+\log(1/\delta)}{m} \right).
\end{equation*}
Hence, at least half of these hypotheses have robust error within a constant factor of this quantity. Sampling $\mathcal{O}(\log(1/\delta))$ of them and using an independent part of the sample to select the one with the smallest empirical robust error yields, with probability at least $1-\delta$, a hypothesis in $\mathcal{H}$ with robust error
\begin{equation*}
    \mathcal{O} \left( \frac{k_p d\log(k_p)+\log(1/\delta)}{m} \right).
\end{equation*}
This proves the claimed sample-complexity upper bound.


\subsection{Additional Related Work} \label{sec:related-work}

\paragraph{Bagging.} Bagging, introduced in a pioneering work of \cite{breiman1996bagging}, trains a base learner on bootstrap resamples of the training data and aggregates the resulting predictors by averaging or majority voting. Several variants modify the resampling mechanism, including \emph{subagging}, which uses sub-samples drawn without replacement \cite{buhlmann2002analyzing}; and \emph{online bagging}, which uses Poisson multiplicities to approximate bootstrap resampling \cite{oza2001online}.

Recently, in learning theory, the seminal work of \cite{larsen2023bagging} proved that classical bagging of empirical risk minimizers achieves the optimal realizable PAC learning sample complexity using only logarithmically many bootstrap samples. Most closely related to our resampling scheme, \cite{wu2025ensemble} studies bagged minimum-norm least-squares estimators under a multiplier-bootstrap framework that includes a \emph{Bernoulli bootstrap}, in which each training example is independently retained with probability $p$. They show that, in the proportional asymptotic regime, bagging suppresses the prediction-risk blow-up exhibited by individual sub-sampled estimators near the interpolation threshold, while also inducing implicit regularization. Our \emph{binomial bagging} employs the same resampling mechanism with $p=1/2$, so that the subsample size is binomially distributed; the exhaustive version, which aggregates over all subsets, is the exact majority vote induced by this resampling distribution.

\paragraph{Tolerant adversarial robustness.} Tolerant adversarially robust PAC learning, introduced by \cite{ashtiani2023adversarially}, relaxes the benchmark against which the learner is compared in our framework. In particular, given nested perturbation maps $\mathcal{U}, \mathcal{V}$, where $\mathcal{U}(x) \subseteq \mathcal{V}(x)$ for every $x \in \mathcal{X}$, the learner is evaluated under $\mathcal{U}$ while competing with the best hypothesis in $\mathcal{H}$ under $\mathcal{V}$; that is, the goal is to guarantee
\begin{equation*}
    R_{\mathcal{U},\mathcal{D}}(\widehat{h}) \leq R_{\mathcal{V},\mathcal{D}}^\star(\mathcal{H}) + \epsilon.
\end{equation*}
For metric perturbations, given $r > 0$, the canonical choice is
$\mathcal{U}(x) = B_r(x)$ and $\mathcal{V}(x) = B_{(1+\gamma) \, r}(x)$, where $\gamma > 0$ is the tolerance parameter and $B_r(x)$ denotes the ball of radius $r$ centered at $x$.

The original work of \cite{ashtiani2023adversarially} established guarantees with essentially linear dependence on the $\operatorname{VC}$ dimension and, for its compression-based agnostic PAC learner, on the doubling dimension. Under additional geometric regularity conditions, a tolerant variant of the robust empirical risk minimizer has been shown to be effective for bounded perturbation regions in $\mathbb{R}^d$ \cite{bhattacharjee2023robust}. Related work studied this topic alongside other relaxations interpolating between average-case and worst-case adversarial robustness \cite{raman2023proper}. More recently, simple ``almost proper'' learners have achieved essentially linear dependence on the $\operatorname{VC}$ dimension without additional structural assumptions on $\mathcal{H}$, while fully proper learning has been shown to be impossible in general even for classes of $\operatorname{VC}$ dimension one \cite{ashtiani2025simplifying}. These positive results exploit the nonzero gap between the perturbation sets used for evaluation and comparison. In contrast, our results address the zero-tolerance case $\mathcal{U} = \mathcal{V}$.

\subsection{Concurrent Work} \label{sec:concurrent-work}

During the preparation of this manuscript, we became aware of the concurrent work of \cite{montasser2026baggingrobustlylearnsvc}, which studies the same problem. All results presented in this work were obtained independently and established before the concurrent preprint became publicly available.

In the realizable setting, Montasser obtains the same optimal sample complexity using bootstrap bagging of RERMs. The two algorithms and their analyses are distinct: Montasser aggregates RERMs trained on independent bootstrap samples drawn from random prefixes of the data and analyzes the resulting ensemble using a second-moment argument, whereas our binomial-bagging approach aggregates RERMs over sub-samples obtained by independently retaining each example with probability $1/2$, and establishes a deterministic $\mathcal{O}(d)$ leave-one-out bound through an elementary counting argument based on the Sauer--Shelah--Perles lemma.

In the agnostic setting, Montasser applies the agnostic-to-realizable reduction of \cite{pmlr-v99-montasser19a}, building on \cite{moran2016sample, david2016supervised}, and obtains
\[
    R_{\mathcal{U},\mathcal{D}}(\widehat{h}) \leq R_{\mathcal{U},\mathcal{D}}^{\star}(\mathcal{H}) + C \left( \sqrt{\frac{d \log^2 m + \log(1/\delta)}{m}} \right).
\]
Indeed, the same reduction can be combined with our realizable learner (as well as the reduction from \cite{pmlr-v178-hopkins22a}; also see \cite{hanneke2025representation}), but it loses logarithmic factors and does not yield the optimal first-order refinement.

The concurrent work also provides a complementary oracle-complexity result. Its realizable learner uses $\mathcal{O}(d^\star + \log(1/\delta))$ calls to a RERM oracle, and Montasser proves that, in the corresponding black-box oracle model, there exist classes for which $\Omega(d^\star)$ calls are necessary at constant confidence, regardless of the number of available samples. By contrast, our main emphasis is on the binomial-bagging principle and its elementary analysis, which may be of independent interest and already yields several immediate consequences that are discussed in \autoref{sec:implications}.

\subsection{Conclusion, Discussion, and Future Directions} \label{sec:conclusion}

In this paper, we showed that adversarial robustness incurs no additional distribution-free statistical cost: uniformly over all perturbation maps $\mathcal{U}$, the optimal sample complexities of adversarially robust PAC learning in both the realizable and agnostic settings match those of ordinary PAC learning, up to universal constant factors. In particular, for every concept class $\mathcal{H}$ of $\operatorname{VC}$ dimension $d$, we prove upper bounds of $\mathcal{O} \big( d/\epsilon + \log(1/\delta)/\epsilon \big)$ in the realizable setting and $\mathcal{O} \big( d/\epsilon^2 + \log(1/\delta)/\epsilon^2 \big)$ in the agnostic setting, which match the corresponding lower bounds for classical PAC learning. Notably, our agnostic result admits a first-order refinement that is optimal up to universal constant factors, matching the corresponding lower bound for ordinary PAC learning as well.

Technically, our results follow from short and elementary proofs based on a new algorithmic principle that we call binomial bagging. In this algorithm, the learner aggregates suitable empirical risk minimizers trained on sub-samples obtained by retaining each training example independently with probability $1/2$. We believe that both this principle and its analysis may be of independent interest.

Finally, we outline several avenues for future research that naturally arise from our study and may be worth exploring further.

\begin{itemize}[label=*]
    \item Our results characterize the optimal sample complexity uniformly over perturbation maps $\mathcal{U}$, but for a fixed pair $(\mathcal{H}, \mathcal{U})$ the statistical complexity can be substantially smaller \cite{pmlr-v99-montasser19a}. A natural direction is therefore to resolve Conjecture~1 of \cite{montasser2022adversarially}, which proposes an optimal characterization of the $\mathcal{U}$-dependent sample complexity in terms of a new combinatorial complexity dimension.

    \item More broadly, a natural direction is to explore further applications of binomial bagging and its analysis across learning theory, beyond the consequences discussed in \autoref{sec:implications}.

    \item The binomial-bagging principle suggests a potential bridge between learning theory and practical machine learning research. In particular, in modern AI systems, RERM can be approximated by adversarial training. This raises the intriguing question of whether combining such approximate RERMs through methods inspired by binomial bagging can improve robust generalization or mitigate robust overfitting.
\end{itemize}

\subsection{Organization} \label{sec:organization}

The remainder of the paper is organized as follows. In \autoref{sec:preliminaries}, we formally set the basic notation, provide definitions, and present some preliminaries. Subsequently, in \autoref{sec:agnostic-proof}, we give a proof of Theorem~\ref{theorem:introduction_1}.

%% file: Main/Preliminaries.tex
\section{Preliminaries} \label{sec:preliminaries}

In this section, we first set our basic notation in \autoref{sec:notation}. We then define the adversarially robust PAC learning framework in \autoref{sec:pac-framework}. Finally, we give the definition of the combinatorial complexity parameter considered in this work in \autoref{sec:combinatorial-complexity-parameter}.

\subsection{Notation} \label{sec:notation}

In this subsection, we present the basic notation used throughout the paper; all of it is standard in the literature and is included for completeness. Let $\mathbb{N}$ and $\mathbb{R}$ stand for the set of natural numbers and real numbers, respectively. In addition, for a given $n \in \mathbb{N}$, we use $[n]$ to denote $\big \{ 1, 2, \ldots, n \big \}$, and set $[0]:=\varnothing$. Next, given $n \in \mathbb{N}$, for any sequence of size $n$ or $n$-tuple $v$, and any $i \in [n]$, let us use $v_i$ to denote the $i$-th element in $v$. Afterward, we denote by $A \times B$ the Cartesian product of two arbitrary sets $A$ and $B$. In addition, for any set $A$ and any $n \in \mathbb{N}$, we let $A^n$ indicate $n$ times the Cartesian product of $A$ with itself. Note that for any set $A$, we define $A^0 := \{\emptyset\}$. Also, given a set $A$, we denote by $A^{*}$ the set of all finite sequences of members of $A$; more formally, $A^{*} := \bigcup^{\infty}_{T = 0} A^T$. Then, for arbitrary sets $X$ and $Y$, we use $Y^X$ to denote the space of all functions from $X$ to $Y$. Going further, let $\mathrm{e}$, $\log{(\cdot)}$, and $\ln{(\cdot)}$ stand for Euler's number, the Logarithm function in base $2$, and the natural logarithm, respectively. Finally, we use $\mathcal{O}(\cdot)$, $\Omega(\cdot)$, and $\Theta(\cdot)$ to denote the standard asymptotic notation in theoretical computer science.

\subsection{Adversarially Robust PAC Learning Framework} \label{sec:pac-framework}

We begin by giving the problem setup. Fix a non-empty measurable space $(\mathcal{X}, \Sigma_{\mathcal{X}})$, referred to as the instance space. For example, the instance space $\mathcal{X}$ can be a Euclidean space. Also, fix $\mathcal{Y} = \{0, 1\}$ as the label space. A concept is a measurable function from $\mathcal{X}$ to $\mathcal{Y}$. With this in mind, fix a non-empty collection of such functions $\mathcal{H}$ as the concept class. Furthermore, fix a perturbation map $\mathcal{U} : \mathcal{X} \to 2^{\mathcal{X}}$, where $\mathcal{U}(x)$ is the set of allowable perturbations of an instance $x$. For example, the perturbation map $\mathcal{U}$ can be defined as $\mathcal{U}(x) = \{ z \in \mathcal{X} \; | \; \lVert z-x \rVert_{\infty} \leq \rho \}$, where $\rho > 0$ is a small radius. We assume throughout that, for every $A \in \Sigma_{\mathcal{X}}$, the set $\{ x \in \mathcal{X} \mid A \, \cap \, \mathcal{U}(x) \neq \varnothing \}$ belongs to $\Sigma_{\mathcal{X}}$. For simplicity, we specify instances of the adversarially robust PAC learning framework solely based on the concept class $\mathcal{H}$, which is defined relative to the measurable structure $(\mathcal{X}, \Sigma_{\mathcal{X}})$, and the perturbation map $\mathcal{U}$.

Next, we present the definition of a data distribution. A data distribution $\mathcal{D}$ is a probability measure over $\mathcal{X} \times \mathcal{Y}$. For example, it may be a probability measure over the space of words in a dictionary and their corresponding binary labels.

Subsequently, we present the definition of a learning algorithm. A PAC learning algorithm $\mathbf{A}$ is a mapping from $(\mathcal{X} \times \mathcal{Y})^{*} \times \mathcal{X}$ to $\mathcal{Y}$ whose restriction to $(\mathcal{X} \times \mathcal{Y})^n \times \mathcal{X}$ is universally measurable for every $n \in \mathbb{N} \cup \{0\}$. In words, a PAC learning algorithm is a function that maps a finite sequence of examples, and an instance, to a label. 

We also require the robust loss of its output to be jointly universally measurable in the training sample and test example.

Given a PAC learning algorithm $\mathbf{A}$, for every dataset $\mathcal{S} \in (\mathcal{X} \times \mathcal{Y})^{n}$, where $n \in \mathbb{N} \cup \{0\}$, define $\mathbf{A}_{\mathcal{S}} : \mathcal{X} \to \mathcal{Y}$ by
\begin{equation*}
    \mathbf{A}_{\mathcal{S}}(x) := \mathbf{A} \big( (\mathcal{S}, x) \big) \qquad \text{for all} \; x \in \mathcal{X}.
\end{equation*}

Afterward, we introduce three definitions: the first concerns the true robust error of a concept, the second concerns the empirical robust error of a concept, and the last concerns the true robust error of a PAC learning algorithm.

Before starting, for a concept $f : \mathcal{X} \to \mathcal{Y}$ and an example $Z = (x,y)$, we write $\ell_{\mathcal U}(f;Z)$ for the robust error indicator of that example, as it appears in the sums below. Thus, repetitions in a sample are counted with multiplicity.

First, we define the \emph{true robust error} of a concept $h$ with respect to a perturbation map $\mathcal{U}$ and a data distribution $\mathcal{D}$, denoted by $R_{\mathcal{U},\mathcal{D}}(h)$, as follows:
\begin{equation*} \label{def:true-robust-error-concept}
    R_{\mathcal{U},\mathcal{D}}(h) := \mathbb{P}_{(X,Y) \sim \mathcal{D}} \Bigl[ \, \exists \, z \in \mathcal{U}(X) \; \text{such that} \; h(z) \neq Y \, \Bigr],
\end{equation*}

Second, we define the \emph{empirical robust error} of a concept $h$ with respect to a perturbation map $\mathcal{U}$ and a dataset $\mathcal{S} = \big( (x_1, y_1), (x_2, y_2), \ldots, (x_n, y_n) \big) \in (\mathcal{X} \times \mathcal{Y})^n$ for some $n \in \mathbb{N}$, denoted by $R_{\mathcal{U},\mathcal{S}}(h)$, as follows:
\begin{equation*} \label{def:emp-robust-error-concept}
    R_{\mathcal{U},\mathcal{S}}(h) := \frac{1}{n} \sum_{i=1}^n \mathds{1} \Bigl\{ \, \exists \, z \in \mathcal{U}(x_i) \; \text{such that} \; h(z) \neq y_i \, \Bigr\}.
\end{equation*}
Note that for the empty sample, we set $R_{\mathcal U,\varnothing}(h):=0$.

Finally, let $\mathbf{A}$ be a PAC learning algorithm. Given $\mathcal{S}$ as above, we define the \emph{true robust error} of $\mathbf{A}_{\mathcal{S}}$ with respect to a perturbation map $\mathcal{U}$ and a data distribution $\mathcal{D}$, denoted by $R_{\mathcal{U},\mathcal{D}}(\mathbf{A}_{\mathcal{S}})$, as follows: 
\begin{equation*} \label{def:true-robust-error-algorithm}
    R_{\mathcal{U},\mathcal{D}}(\mathbf{A}_{\mathcal{S}}) := \mathbb{P}_{(X,Y) \sim \mathcal{D}} \Bigl[ \, \exists \, z \in \mathcal{U}(X) \; \text{such that} \; \mathbf{A}_{\mathcal{S}}(z) \neq Y \, \Bigr].
\end{equation*}

Building on the previous definitions, we now define adversarially robust agnostic PAC learnability. This definition is based on the same principles as those introduced by \cite{valiant1984theory, vapnik1974theory}.

\begin{definition}[Adversarially robust agnostic PAC learnability] \label{def:adversarially-robust-agnostic-pac-learnability}
We say that a concept class $\mathcal{H}$ together with a perturbation map $\mathcal{U}$ is adversarially robust agnostic PAC learnable, if there exists a PAC learning algorithm $\mathbf{A}$ and a sample complexity function $m^{\mathbf{A}}_{\mathcal{H},\mathcal{U}}: (0, 1) \times (0, 1) \to \mathbb{N}$ such that, for every data distribution $\mathcal{D}$, every $\epsilon \in (0, 1)$, and every $\delta \in (0, 1)$, for $n \geq m^{\mathbf{A}}_{\mathcal{H},\mathcal{U}}(\epsilon, \delta)$, we have:
\begin{equation*}
    \mathbb{P}_{\mathcal{S} \sim \mathcal{D}^n} \, \big[ \, R_{\mathcal{U},\mathcal{D}}(\mathbf{A}_{\mathcal{S}}) \leq \, \inf_{h \in \mathcal{H}} \, R_{\mathcal{U},\mathcal{D}}(h) + \epsilon \, \big] \geq 1 - \delta.
\end{equation*}

Now, let $\mathbf{A}$ be a PAC learning algorithm. We define the sample complexity of $\mathbf{A}$ as the function $\mathbf{m}_{\mathcal{H},\mathcal{U}}^{\mathbf{A}} : (0, 1) \times (0, 1) \to \mathbb{N}\cup\{+\infty\}$ such that, for every $\epsilon \in (0,1)$ and $\delta \in (0, 1)$, the value $\mathbf{m}_{\mathcal{H},\mathcal{U}}^{\mathbf{A}}(\epsilon, \delta)$ is the pointwise infimum over all functions satisfying the above guarantee for $\mathbf{A}$. We use the convention $\inf\varnothing=+\infty$.

Finally, the optimal sample complexity of adversarially robust agnostic PAC learning $\mathcal{H}$ together with $\mathcal{U}$, denoted by $\mathbf{m}_{\mathcal{H},\mathcal{U}}: (0, 1) \times (0, 1) \to \mathbb{N}\cup\{+\infty\}$, is then defined as follows:
\begin{equation*}
    \mathbf{m}_{\mathcal{H},\mathcal{U}}(\epsilon, \delta) := \inf_{\mathbf{A}} \, \mathbf{m}_{\mathcal{H},\mathcal{U}}^{\mathbf{A}}(\epsilon, \delta) \qquad \text{for all} \; \epsilon, \delta \in (0, 1).
\end{equation*}
that is, given $\epsilon, \delta \in (0, 1)$, the minimal number of samples sufficient for the existence of some PAC learning algorithm that satisfies the above guarantee.
\end{definition}

At the end of this subsection, we define a robust empirical risk minimizer, which will play a central role in our algorithms. 

\begin{definition}[Robust empirical risk minimizer (RERM)]
Fix a concept class $\mathcal{H} \subseteq \{0,1\}^{\mathcal{X}}$ and a perturbation map $\mathcal{U}$. A \emph{robust empirical risk minimizer (RERM)} is a mapping
\begin{equation*}
    \operatorname{RERM}_{\mathcal{H},\mathcal{U}} : (\mathcal{X} \times \mathcal{Y})^{*} \to \mathcal{H}
\end{equation*}
such that, for every dataset $\mathcal{S} \in (\mathcal{X} \times \mathcal{Y})^{*}$,
\begin{equation*}
    \operatorname{RERM}_{\mathcal{H},\mathcal{U}}(\mathcal{S}) \in \argmin_{h\in\mathcal{H}} R_{\mathcal{U},\mathcal{S}}(h).
\end{equation*}
When there are multiple minimizers, the RERM may select any one of them.
\end{definition}

Throughout the paper, we fix a deterministic RERM whose output
depends only on the input multiset, and write $\mathsf{RERM}$ for
this rule. Thus it is permutation-invariant, including its fixed
choice on the empty sample. On a standard Borel instance space,
one may also obtain permutation invariance by sorting the
instance-label pairs using a measurable total order before applying
a fixed RERM \cite[Appendix~D]{hanneke2021universal}.

\paragraph{Measurability.} We assume that the chosen RERM has jointly measurable evaluation in the sample and query. For every score $q_S$ constructed in Section~\ref{sec:agnostic-proof}, including RERM outputs, binomial-bagging scores, and finite averages, we also assume that
\begin{equation*}
    (S,x) \longmapsto \sup_{z \in \mathcal{U}(x)} q_S(z)
    \quad \text{and} \quad
    (S,x) \longmapsto \inf_{z \in \mathcal{U}(x)} q_S(z)
\end{equation*}
are jointly measurable, taking values $0$ and $1$, respectively, when $\mathcal U(x)=\varnothing$. These are standing regularity assumptions on the class, perturbation map, and chosen RERM; they do not follow from finite VC dimension or permutation invariance. They ensure that the random losses and risks in the proof are measurable.

\subsection{Combinatorial Complexity Parameter} \label{sec:combinatorial-complexity-parameter}

In this subsection, we introduce the combinatorial complexity parameter that appears in our results, namely the $\operatorname{VC}$ dimension.

\begin{definition}[Shattered set] \label{def:shattered-set}
Let $\mathcal{H} \subseteq \{0,1\}^\mathcal{X}$ be a concept class, and let $S \subseteq \mathcal{X}$ be a finite set of instances. We say that $S$ is \emph{shattered} by $\mathcal{H}$ if for every $f : S \to \{0,1\}$, there exists a concept $h \in \mathcal{H}$ such that $h(x) = f(x)$ for all $x \in S$.
\end{definition}

\begin{definition}[$\operatorname{VC}$ dimension] \label{def:vc-dim}
Let $\mathcal{H} \subseteq \{0,1\}^\mathcal{X}$ be a concept class. The \emph{$\operatorname{VC}$ dimension} of $\mathcal{H}$, denoted by $\operatorname{VC}(\mathcal{H}) \in \mathbb{N} \cup \{0, +\infty\}$, is defined by the $\sup_{d \in \mathbb{N} \cup \{0\}}$ such that, there exists a finite set of instances $S \subseteq \mathcal{X}$ of size $d$ that is shattered by $\mathcal{H}$. Also, if $\mathcal{H} = \emptyset$, we have: $\operatorname{VC}(\mathcal{H}) = 0$.
\end{definition}

%% file: Main/Proofs.tex
\section{Proof of the Main Result}\label{sec:agnostic-proof}

Throughout this section, we fix a distribution $\mathcal D$ and write
\[
    R(h):=R_{\mathcal U,\mathcal D}(h),
    \qquad
    R^\star:=\inf_{h\in\mathcal H}R(h),
    \qquad
    r(T):=\min_{h\in\mathcal H}\sum_{Z\in T}\ell_{\mathcal U}(h;Z),
\]
for every $h\in\mathcal H$ and every finite sample
$T\in(\mathcal X\times\mathcal Y)^{*}$, where the occurrences in $T$ are
counted with multiplicity. Thus $r(T)$ is the number of robust mistakes
that the best hypothesis in the class makes on $T$. We also write $b_T$
for the binomial-bagging score of $T$ from
Algorithm~\ref{alg:binomial-bagging}, and $T^{-i}$ for the sample
obtained from $T$ by deleting its $i$-th occurrence.

\paragraph{Plan of the proof.} In the realizable case
(Section~\ref{sec:technical-overview}) the majority vote of the
binomial-bagging score suffices, because it makes only $\mathcal O(d)$
leave-one-out mistakes. Agnostically a majority vote is too blunt:
rounding the score at $1/2$ discards exactly what we need, namely how
close to the boundary each prediction was. We therefore keep the score
and analyze it directly, in three steps.

First (Section~\ref{sec:deterministic-agnostic-bound}), a deterministic
leave-one-out bound: deleting an example and scoring it with the score
built from the rest costs, in total, at most $r(S)+\mathcal O(d/\alpha)$,
provided each term is first \emph{truncated} at a level $\alpha>0$.
Second (Section~\ref{sec:agnostic-conversion-main}), the learner averages
the scores of a range of prefixes, which lets us read one sequence of
losses in two directions of time --- forwards each is an unbiased
estimate of its population value, backwards each is an average of
leave-one-out terms --- and this turns the first step into a
high-probability bound. Third, a threshold chosen on a held-out block
turns the score back into a classifier at the cost of a factor
$1+\mathcal O(\alpha)$. Optimizing $\alpha$ at the very end gives the
first-order rate; the learner never sees it.

\subsection{The Learner} \label{sec:agnostic-learner}

Two degenerate cases are disposed of first. If $\mathcal H$ consists of a
single hypothesis, the learner returns it and attains $R^\star$ exactly; this
covers $d=0$. If $m<6$, the learner returns $\mathsf{RERM}(\varnothing)$
and ignores the sample, which is admissible because the robust loss is
bounded by $1$ while the error term of
Theorem~\ref{theorem:introduction_1} is at least $C(d+\log(1/\delta))/m
\geq Cd/6$, so the claim holds once $C$ is large enough. For the rest of
this section we assume $d\geq1$ and $m\geq6$, and the learner is the one
in Algorithm~\ref{alg:learned-threshold}.

\begin{figure}[!t]
\captionsetup{skip=5pt}
\centering
\begin{tcolorbox}
\begin{center}
\textbf{Binomial Bagging with a Learned Threshold}
\end{center}
\noindent \textbf{Input:} A sample
\[
    S=(Z_1,Z_2,\ldots,Z_m)\in(\mathcal X\times\mathcal Y)^m,
    \qquad m\geq6.
\]
\begin{enumerate}
    \item Set $k:=\lfloor m/3\rfloor$. For each $t=k,k+1,\ldots,2k-1$,
    compute the binomial-bagging score of the length-$t$ prefix, namely,
    \[
        S_{\leq t}:=(Z_1,Z_2,\ldots,Z_t),
        \qquad b_t:=b_{S_{\leq t}}.
    \]
    \item Average these $k$ scores, namely,
    \[
        g_S:=\frac1k\sum_{t=k}^{2k-1}b_t.
    \]
    \item On the held-out block
    $V:=(Z_{2k+1},Z_{2k+2},\ldots,Z_{3k})$, take the largest threshold
    minimizing the empirical robust error, namely,
    \[
        \widehat u:=\max\argmin_{u\in[0,2]}
        R_{\mathcal U,V}
        \bigl(\mathds{1}\{g_S\geq u\}\bigr).
    \]

    \item Return the classifier
    $\widehat h:\mathcal X\rightarrow\mathcal Y$ such that for every
    $x\in\mathcal X$, we have
    \[
        \widehat h(x):=\mathds{1}\{g_S(x)\geq\widehat u\}.
    \]
\end{enumerate}
\end{tcolorbox}
\caption{Binomial bagging with a learned threshold. The score is built by
bagging over prefixes of the sample, and its threshold is learned on a
held-out block.}
\label{alg:learned-threshold}
\end{figure}

The first $2k-1$ examples build the prefix-averaged score, the next
example $Z_{2k}$ is not used by the learner, the following $k$ examples
form the validation block $V$, and the at most two examples beyond
$Z_{3k}$ are discarded. Averaging the prefix scores, rather than using
the single score $b_{S_{\leq 2k-1}}$, is what makes the concentration
argument of Section~\ref{sec:agnostic-conversion-main} possible; a single
score does not provide the martingale sequence used in our
high-probability conversion. The threshold ranges over $[0,2]$ rather
than $[0,1]$ for two reasons: it makes both constant classifiers
available ($u=0$ gives $\widehat h\equiv1$ and $u=2$ gives
$\widehat h\equiv0$), and it accommodates the sentinel encoding used in
the proof of Lemma~\ref{lem:truncated-threshold-selection}, which also
shows that the maximum defining $\widehat u$ is attained.

Although $Z_{2k}$ is invisible to the learner, it reappears in the
analysis as the fresh example against which the last prefix score
$b_{2k-1}$ is tested. The learner uses neither $d$ nor $r(S)$, so it
needs no knowledge of the complexity of the class or of the noise
level.

\subsection{The Truncated Loss} \label{sec:truncated-loss}

The object the learner builds, $g_S$, is a score rather than a
classifier, so the analysis needs a loss that accepts scores. For a
measurable $q:\mathcal X\to[0,1]$ set
\[
    \overline\ell_{\mathcal U}(q;(x,y))
       :=\sup_{z\in\mathcal U(x)}|q(z)-y|,
\]
with the value $0$ when $\mathcal U(x)=\varnothing$. The bar denotes the
worst case over the perturbation set, exactly as in $\ell_{\mathcal U}$,
and the two losses agree when $q$ takes values in $\{0,1\}$. For
$\alpha>0$ define the truncated loss and its population value
\[
    \phi_\alpha(q;Z):=
       \bigl(\overline\ell_{\mathcal U}(q;Z)-\alpha\bigr)_+,
    \qquad
    \Phi_\alpha(q):=\mathbb E_{Z\sim\mathcal D}\,\phi_\alpha(q;Z),
    \qquad
    (v)_+:=\max\{v,0\}.
\]
Truncation forgives the first $\alpha$ units of loss on every example,
and this is what buys Lemma~\ref{lem:truncated-loo}: the argument below
bounds the untruncated leave-one-out sum by an additive $\sqrt{d n}$ term
(inequality~\eqref{eq:loo-untruncated}), which grows with the sample,
whereas after truncation the same argument gives
$\mathcal O(d/\alpha)$, which does not. For later use we note that
$\phi_\alpha$ is convex in $q$, since
$\overline\ell_{\mathcal U}(\cdot;Z)$ is a supremum of convex functions of
$q$ and $v\mapsto(v-\alpha)_+$ is convex and nondecreasing. The parameter
$\alpha$ appears nowhere in the learner; it is fixed at the end of the
proof of Theorem~\ref{theorem:introduction_1}.

\subsection{A Leave-One-Out Bound} \label{sec:deterministic-agnostic-bound}

The following bound is deterministic: it holds for every sample, with no
distributional assumption whatsoever.

\begin{lemma}\label{lem:truncated-loo}
There is a universal constant $C$ such that, for every sample
$S=(Z_1,\ldots,Z_n)$ with $n\geq1$ and every $\alpha>0$,
\begin{equation}
    \sum_{i=1}^n\phi_\alpha(b_{S^{-i}};Z_i)
    \leq r(S)+\frac{Cd}{\alpha}.
    \label{eq:truncated-loo}
\end{equation}
\end{lemma}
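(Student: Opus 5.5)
We compare, for a uniformly random index set $A\subseteq[n]$, the errors of $h_A=\mathsf{RERM}(S_A)$ on retained examples with its errors on omitted ones. The retained side is controlled by $r(S)$ through optimality of the RERM, and the retained-versus-omitted gap is controlled by a Rademacher average of the error trace at fixed witnesses. Truncation then removes the $\sqrt{d n}$ growth.

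\emph{Setup.} Fix $\alpha>0$ and a slack $\eta>0$. For each $i$, pick a witness $z_i\in\mathcal U(x_i)$ with $|b_{S^{-i}}(z_i)-y_i|\geq\overline\ell_{\mathcal U}(b_{S^{-i}};Z_i)-\eta/n$. Indices with $\mathcal U(x_i)=\varnothing$ contribute $0$ and are dropped. The key observation is that the subsets of $[n]\setminus\{i\}$ used to build $b_{S^{-i}}$ are exactly the sets $A\subseteq[n]$ omitting $i$, with the same hypotheses $h_A$. Hence, writing $e_i(A):=\mathds 1\{h_A(z_i)\neq y_i\}$,
\[
a_i:=|b_{S^{-i}}(z_i)-y_i|=\mathbb E\bigl[e_i(A)\,\big|\,i\notin A\bigr].
\]
Let $I:=\{i: a_i>\alpha\}$. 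Only these indices can contribute, and
\[
\sum_{i}\phi_\alpha(b_{S^{-i}};Z_i)\leq\sum_{i\in I}(a_i-\alpha)+\eta .
\]

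\emph{Retained side.} For $i\in A$ we have $e_i(A)\leq\ell_{\mathcal U}(h_A;Z_i)$. Since all terms are nonnegative,
\[
\sum_{i\in I}\mathbb E[e_i(A)\mid i\in A]\leq 2\,\mathbb E\Bigl[\sum_{i\in A}\ell_{\mathcal U}(h_A;Z_i)\Bigr].
\]
Now let $h^\star:=\mathsf{RERM}(S)$. Optimality of $h_A$ on $S_A$ gives $\sum_{i\in A}\ell_{\mathcal U}(h_A;Z_i)\leq\sum_{i\in A}\ell_{\mathcal U}(h^\star;Z_i)$. Each $i$ lies in $A$ with probability $1/2$, so the right-hand side of the display is at most $r(S)$.

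\emph{Gap.} Set $\sigma_i:=\mathds 1\{i\notin A\}-\mathds 1\{i\in A\}$. These are i.i.d.\ uniform signs, and
\[
\mathbb E[e_i\mid i\notin A]-\mathbb E[e_i\mid i\in A]=2\,\mathbb E[\sigma_i e_i(A)].
\]
Summing over $I$ and taking a supremum over $h\in\mathcal H$ in place of $h_A$ gives
\[
\sum_{i\in I}a_i\leq r(S)+2\,\mathbb E_\sigma\sup_{h\in\mathcal H}\sum_{i\in I}\sigma_i\mathds 1\{h(z_i)\neq y_i\}.
\]
The class of traces $(\mathds 1\{h(z_i)\neq y_i\})_{i\in I}$ has VC dimension at most $d$; the fixed labels only flip coordinates, and repeated witnesses do not help, as in Lemma~\ref{lem:realizable-loo}. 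The standard chaining (Dudley plus Haussler packing) bound therefore gives a Rademacher term of at most $C_0\sqrt{d|I|}$, with no logarithm. This is the untruncated bound $r(S)+C\sqrt{dn}$ of \eqref{eq:loo-untruncated}.

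\emph{Truncation.} Subtracting $\alpha|I|$ gives
\[
\sum_{i\in I}(a_i-\alpha)\leq r(S)+2C_0\sqrt{d|I|}-\alpha|I|\leq r(S)+\frac{C_0^2 d}{\alpha},
\]
by maximizing $2C_0\sqrt{dx}-\alpha x$ over $x\geq 0$. Letting $\eta\to0$ yields \eqref{eq:truncated-loo}. The main point of care is the Rademacher bound: it must be the log-free $\mathcal O(\sqrt{d|I|})$ version, since a $\sqrt{d|I|\log|I|}$ bound would leave a $\log$ factor after optimizing. A second subtlety is that the witnesses $z_i$ must be fixed before randomizing $A$; this holds because they depend only on $S$ and not on $A$.
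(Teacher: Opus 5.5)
Your proposal is correct and matches the paper's proof essentially step for step. It uses the identification $a_i=\mathbb E[e_i\mid i\notin A]$, the retained-side comparison with $\mathsf{RERM}(S)$ yielding $r(S)$, the sign decomposition leading to the log-free Rademacher bound $\mathcal O(\sqrt{d|I|})$, and the final maximization of $2C_0\sqrt{d|I|}-\alpha|I|$. The only difference is your $\eta$-slack when choosing witnesses; the paper avoids it by noting that $b_{S^{-i}}$ has finite range, so the supremum over $\mathcal U(x_i)$ is attained.
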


\begin{proof}
Write $Z_i=(x_i,y_i)$, put
$p_i:=\overline\ell_{\mathcal U}(b_{S^{-i}};Z_i)$, and let
$I:=\{i:p_i>\alpha\}$ be the set of examples that survive truncation. If
$I=\varnothing$ the left-hand side of \eqref{eq:truncated-loo} vanishes
and there is nothing to prove, so assume $I\neq\varnothing$. Each score
$b_{S^{-i}}$ takes finitely many values, so for every $i\in I$ we may fix
a witness $z_i\in\mathcal U(x_i)$ attaining the supremum, that is,
$p_i=|b_{S^{-i}}(z_i)-y_i|$.

\circled{1} \textbf{Retained versus omitted examples.}
Draw $A\subseteq[n]$ uniformly at random and write
$h_A:=\mathsf{RERM}(S_A)$ and
$e_i:=\mathds{1}\{h_A(z_i)\neq y_i\}$. Let
$\varepsilon_i:=1-2\cdot\mathds{1}\{i\in A\}$, so that the $\varepsilon_i$
are independent uniform signs, equal to $+1$ exactly when $i$ is omitted.

Conditioned on $i\notin A$, the set $A$ is uniform over the subsets
omitting $i$, and this is precisely the distribution defining
$b_{S^{-i}}$. Hence
$b_{S^{-i}}(z_i)=\mathbb P[h_A(z_i)=1\mid i\notin A]$, and subtracting
$y_i$ turns this into a probability of error for either label:
\begin{equation}
    p_i=\mathbb P\bigl[h_A(z_i)\neq y_i\,\big|\,i\notin A\bigr],
    \qquad i\in I.
    \label{eq:p-as-probability}
\end{equation}

Omitted and retained examples behave very differently, and the proof
plays the two against each other. On the retained side,
$e_i\leq\ell_{\mathcal U}(h_A;Z_i)$ because $z_i\in\mathcal U(x_i)$, and
$h_A$ minimizes the robust empirical error on $S_A$ over the class, so
comparing it with $\mathsf{RERM}(S)$ gives
\begin{equation}
    \mathbb E\sum_{i\in I\cap A}e_i
    \leq\mathbb E\sum_{i\in A}\ell_{\mathcal U}(h_A;Z_i)
    \leq\mathbb E\sum_{i\in A}\ell_{\mathcal U}(\mathsf{RERM}(S);Z_i)
    =\frac{r(S)}2,
    \label{eq:retained-side}
\end{equation}
the last equality because each index lies in $A$ with probability $1/2$.
On the omitted side, \eqref{eq:p-as-probability} gives
\[
    \mathbb E\sum_{i\in I\setminus A}e_i
    =\sum_{i\in I}\mathbb P[i\notin A]\;
      \mathbb E\bigl[e_i\mid i\notin A\bigr]
    =\frac12\sum_{i\in I}p_i.
\]
Since $\sum_{i\in I}\varepsilon_ie_i$ is the omitted count minus the
retained count, taking expectations in
$\sum_{i\in I\setminus A}e_i=\sum_{i\in I\cap A}e_i
 +\sum_{i\in I}\varepsilon_ie_i$ and rearranging yields
\[
    \sum_{i\in I}p_i
    =2\,\mathbb E\sum_{i\in I\cap A}e_i
     +2\,\mathbb E\sum_{i\in I}\varepsilon_ie_i .
\]
The first term is at most $r(S)$ by \eqref{eq:retained-side}. In the
second, forget that the hypothesis is $h_A$ and pass to a supremum over
the class, which leaves a Rademacher average:
\begin{equation}
    \sum_{i\in I}p_i
    \leq r(S)+2\,\mathbb E\sup_{h\in\mathcal H}
        \sum_{i\in I}\varepsilon_i\mathds{1}\{h(z_i)\neq y_i\}
    \leq r(S)+C_{\mathrm R}\sqrt{d\,|I|}.
    \label{eq:loo-untruncated}
\end{equation}
The last step is the standard bound on the Rademacher average of a class
of VC dimension at most $d$
\cite[Sections~3.2 and~4.3]{devroye2001combinatorial}, with a universal
constant $C_{\mathrm R}$. The error indicators
$\mathds{1}\{h(z_i)\neq y_i\}$ do form such a class: flipping coordinates
according to the fixed labels $y_i$ and repeating witnesses $z_i$ leave
the VC dimension unchanged.

\circled{2} \textbf{Paying for the truncation.}
Every $i\notin I$ contributes $0$ to the left-hand side of
\eqref{eq:truncated-loo}, and every $i\in I$ contributes $p_i-\alpha$.
Subtracting $\alpha|I|$ from \eqref{eq:loo-untruncated},
\[
    \sum_{i=1}^n\phi_\alpha(b_{S^{-i}};Z_i)
    =\sum_{i\in I}(p_i-\alpha)
    \leq r(S)+C_{\mathrm R}\sqrt{d\,|I|}-\alpha|I|
    \leq r(S)+\frac{C_{\mathrm R}^2d}{4\alpha},
\]
where the last step maximizes $C_{\mathrm R}\sqrt d\,s-\alpha s^2$ over
$s=\sqrt{|I|}\geq0$; this is \eqref{eq:truncated-loo} with
$C:=C_{\mathrm R}^2/4$.
\end{proof}

\subsection{From Leave-One-Out to High Probability}
\label{sec:agnostic-conversion-main}

We first record the concentration inequality used in the argument. Both
halves are standard; we state them together because the proof applies
them in both directions of time.

\begin{lemma}[Multiplicative concentration]\label{lem:multiplicative-concentration}
Let $Y_1,\ldots,Y_N\in[0,1]$ be adapted to a filtration
$(\mathcal F_j)_{j=0}^N$, and write $\mu_j:=\mathbb E[Y_j\mid\mathcal F_{j-1}]$.
For fixed $0<\lambda\leq1$ and $L>0$, each of the inequalities
\[
    \sum_jY_j\leq(1+\lambda)\sum_j\mu_j+\frac L\lambda,
    \qquad
    \sum_j\mu_j\leq(1+\lambda)
                       \left(\sum_jY_j+\frac L\lambda\right)
\]
holds with probability at least $1-\mathrm{e}^{-L}$.
\end{lemma}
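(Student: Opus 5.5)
I will prove both halves with the standard exponential supermartingale (Freedman/Bernstein-type) argument, applied once to the increments $Y_j$ and once to the reversed increments. The only inputs are the elementary inequalities $\mathrm e^{\theta y}\le 1+(\mathrm e^{\theta}-1)y$ for $y\in[0,1]$ and any real $\theta$ (convexity of $y\mapsto \mathrm e^{\theta y}$ on $[0,1]$), together with $1+v\le \mathrm e^{v}$.

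\emph{First inequality.} Fix $\theta>0$, to be chosen later, and set
\[
M_n:=\exp\Bigl(\theta\sum_{j\le n}Y_j-(\mathrm e^\theta-1)\sum_{j\le n}\mu_j\Bigr).
\]
Conditioning on $\mathcal F_{n-1}$ gives
\[
\mathbb E[\mathrm e^{\theta Y_n}\mid\mathcal F_{n-1}]\le 1+(\mathrm e^\theta-1)\mu_n\le \exp\bigl((\mathrm e^\theta-1)\mu_n\bigr).
\]
Since $\mu_n$ is $\mathcal F_{n-1}$-measurable, this shows that $(M_n)$ is a nonnegative supermartingale with $M_0=1$, so $\mathbb E M_N\le1$. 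Markov's inequality then gives $\mathbb P[M_N\ge \mathrm e^{L}]\le \mathrm e^{-L}$. Hence, with probability at least $1-\mathrm e^{-L}$,
\[
\sum Y_j\le \frac{\mathrm e^\theta-1}{\theta}\sum\mu_j+\frac L\theta.
\]
I choose $\theta=\ln(1+\lambda)$, so that $\mathrm e^\theta-1=\lambda$. For $\lambda\in(0,1]$ one has $\ln(1+\lambda)\ge\lambda/(1+\lambda)\ge \lambda/2$, and therefore $\lambda/\theta\le 1+\lambda$. The additive term becomes $L/\ln(1+\lambda)$, which is at most $(1+\lambda)L/\lambda$ rather than $L/\lambda$. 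To remove this loss I will instead try $\theta=\lambda$ and use $(\mathrm e^\lambda-1)/\lambda\le 1+\lambda$ on $(0,1]$. This holds because $(\mathrm e^\lambda-1)/\lambda=\sum_{k\ge0}\lambda^k/(k+1)!\le 1+\lambda(\mathrm e-2)\le1+\lambda$. The additive term is then exactly $L/\lambda$, which gives the first claim.

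\emph{Second inequality.} I use the same construction with $\theta=-\lambda$, that is, with the supermartingale
\[
\exp\Bigl(-\lambda\sum Y_j+(1-\mathrm e^{-\lambda})\sum\mu_j\Bigr).
\]
The same convexity bound and the same Markov step give, with probability at least $1-\mathrm e^{-L}$,
\[
(1-\mathrm e^{-\lambda})\sum\mu_j\le \lambda\sum Y_j+L.
\]
Dividing through yields $\sum\mu_j\le \frac{\lambda}{1-\mathrm e^{-\lambda}}\bigl(\sum Y_j+L/\lambda\bigr)$. It remains to check $\lambda/(1-\mathrm e^{-\lambda})\le 1+\lambda$. This is equivalent to $\mathrm e^{-\lambda}(1+\lambda)\le 1$, i.e.\ $1+\lambda\le \mathrm e^{\lambda}$, which holds for all $\lambda$.

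\emph{Main obstacle.} The only delicate point is the constant bookkeeping in the first half: the multiplicative factor must be at most $1+\lambda$ while the additive term stays exactly $L/\lambda$. The choice $\theta=\lambda$ achieves both, and it is the step where the restriction $\lambda\le1$ is used. Measurability is automatic, since the $Y_j$ are adapted and bounded, so every conditional expectation above is well defined.
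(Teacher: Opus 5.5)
Your proof is correct and follows the paper's argument: the chord bound for $y\mapsto\mathrm{e}^{\pm\lambda y}$ on $[0,1]$, an exponential supermartingale, and Markov's inequality with $\theta=\pm\lambda$, where your bound $(\mathrm{e}^\lambda-1)/\lambda\leq1+\lambda$ is exactly the paper's $\mathrm{e}^\lambda-1\leq\lambda+\lambda^2$. The only difference is in the second half: you check $\lambda/(1-\mathrm{e}^{-\lambda})\leq1+\lambda$ directly, which holds for every $\lambda>0$, whereas the paper goes through $1-\mathrm{e}^{-\lambda}\geq\lambda-\lambda^2/2$ and then $(1-\lambda/2)^{-1}\leq1+\lambda$, which needs $\lambda\leq1$.
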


\begin{proof}
For $0<\lambda\leq1$ we have $\mathrm{e}^\lambda-1\leq\lambda+\lambda^2$
and $1-\mathrm{e}^{-\lambda}\geq\lambda-\lambda^2/2$. Since
$v\mapsto\mathrm{e}^{\pm\lambda v}$ is convex, its value on $[0,1]$ is at
most the corresponding chord, so taking conditional expectations gives
\[
    \mathbb E[\mathrm{e}^{\lambda Y_j}\mid\mathcal F_{j-1}]
      \leq1+(\mathrm{e}^\lambda-1)\mu_j
      \leq\mathrm{e}^{(\lambda+\lambda^2)\mu_j},
    \qquad
    \mathbb E[\mathrm{e}^{-\lambda Y_j}\mid\mathcal F_{j-1}]
      \leq\mathrm{e}^{-(\lambda-\lambda^2/2)\mu_j}.
\]
Multiplying these factors along the filtration and applying Markov's
inequality gives the first claim, and also
$(1-\lambda/2)\sum_j\mu_j\leq\sum_jY_j+L/\lambda$. The second claim
follows from $(1-\lambda/2)^{-1}\leq1+\lambda$.
\end{proof}

The next lemma is the third step: it converts a score into a classifier,
at the cost of a $1+\mathcal O(\alpha)$ factor over the truncated loss of
the score, plus an additive term of the right order.
The proof builds on \cite[Lemma~3.3]{engelund2026optimal}, after reducing robust threshold selection to an ordinary one-dimensional threshold problem.

\begin{lemma}[Threshold selection]\label{lem:truncated-threshold-selection}
Fix a measurable score $q:\mathcal X\to[0,1]$ with finite range, and let
$V\sim\mathcal D^n$ with $n\geq1$. Write $h_{q,u}:=\mathds{1}\{q\geq u\}$
and let $\widehat u\in[0,2]$ be the largest threshold minimizing the
robust empirical error on $V$. There is a universal constant $C$ such
that, for every $0<\alpha\leq1/8$ and every $\eta\in(0,1)$, with
probability at least $1-\eta$,
\begin{equation}
    R(h_{q,\widehat u})
    \leq(1+4\alpha)\Phi_\alpha(q)
           +C\frac{\ln(12/\eta)}{\alpha n}.
    \label{eq:truncated-threshold-selection}
\end{equation}
\end{lemma}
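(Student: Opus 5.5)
The plan is to reduce robust threshold selection to an ordinary one-dimensional threshold problem, and then to invoke \cite[Lemma~3.3]{engelund2026optimal} as a black box.

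\textbf{Step 1: encode each example by two numbers.} For an example $Z=(x,y)$, set
\[
    a(x):=\inf_{z\in\mathcal U(x)}q(z),
    \qquad
    s(x):=\sup_{z\in\mathcal U(x)}q(z).
\]
Both are attained, because $q$ has finite range. The classifier $h_{q,u}$ makes a robust error on $(x,1)$ exactly when $a(x)<u$, and on $(x,0)$ exactly when $s(x)\geq u$. The encoding is therefore $w:=a(x)$ when $y=1$ and $w:=s(x)$ when $y=0$. With this choice,
\[
    \ell_{\mathcal U}(h_{q,u};Z)=\mathds 1\{\mathds 1\{w\geq u\}\neq y\},
\]
so the robust error of $h_{q,u}$ equals the ordinary error of the one-dimensional threshold $\mathds 1\{w\geq u\}$ on the pushforward distribution of $(w,y)$.

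\textbf{Sentinel for empty perturbation sets.} When $\mathcal U(x)=\varnothing$ there is no error for any $u$. I encode such examples by a sentinel value compatible with every threshold. Taking $w=2$ with $y=1$ works: the example is correct for all $u\leq2$, and it is also consistent with the standing conventions $\sup=0$ and $\inf=1$. This is the reason thresholds range over $[0,2]$. It also shows that the empirical error, as a function of $u$, is a left-continuous step function with finitely many breakpoints, so the largest minimizer $\widehat u$ exists.

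\textbf{Step 2: the truncated loss in the new coordinates.} The score loss becomes
\[
    \overline\ell_{\mathcal U}(q;Z)=|w-y|,
\]
namely $1-a$ when $y=1$ and $s$ when $y=0$. For the sentinel the loss is $0$, since $\sup$ over the empty set is $0$. Hence $\Phi_\alpha(q)=\mathbb E(|w-y|-\alpha)_+$, and $q$ disappears from the problem: we are left with a distribution on $[0,2]\times\{0,1\}$ and the class of thresholds. Measurability of $w$ follows from the standing assumption on $\sup/\inf$ over $\mathcal U(x)$.

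\textbf{Step 3: apply \cite[Lemma~3.3]{engelund2026optimal}.} That lemma should say the following. For real scores $w\in[0,1]$ (extended by the sentinel), the largest empirical-risk-minimizing threshold on $n$ i.i.d.\ samples satisfies, with probability at least $1-\eta$,
\[
    \mathbb P\bigl[\mathds 1\{w\geq\widehat u\}\neq y\bigr]
    \leq(1+4\alpha)\,\mathbb E(|w-y|-\alpha)_+ + C\,\frac{\ln(12/\eta)}{\alpha n}.
\]
The main obstacle is matching its hypotheses exactly: the range of $w$, the tie-breaking rule (largest minimizer), and whether it compares against the truncated absolute loss. If the lemma is stated only for scores in $[0,1]$, I would instead map the sentinel to a point above $1$ and argue directly.

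\textbf{Fallback argument if the lemma does not match.} First, for a uniformly random threshold $u\in[\alpha,1-\alpha]$, rescaled, the expected 0-1 error is at most $(1-2\alpha)^{-1}\Phi_\alpha(q)\le(1+4\alpha)\Phi_\alpha(q)$. This follows from Markov's inequality applied pointwise: an error at threshold $u$ forces $|w-y|\geq$ the distance from $u$ to $y$. Consequently some fixed threshold $u^\star$ achieves this bound. Second, a multiplicative Bernstein or relative-deviation bound for the one-dimensional threshold class, whose VC dimension is $1$, combined with \cref{lem:multiplicative-concentration} applied in both directions at a data-dependent set of $\mathcal O(1/\alpha)$ quantile-grid thresholds, compares the risk of ERM to that of $u^\star$. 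The loss there is a factor $1+\mathcal O(\alpha)$ plus $\mathcal O(\ln(1/\eta)/(\alpha n))$. Rescaling $\alpha$ by a constant absorbs all of this into the stated constants.
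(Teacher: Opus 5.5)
Your reduction matches the paper's: the label-dependent $\max$/$\min$ of $q$ over $\mathcal U(x)$. The first half of your fallback is also exactly the paper's second step: averaging over a uniform threshold in $[\alpha,1-\alpha]$ gives $(1-2\alpha)\,Q\le\Phi_\alpha(q)$, where $Q:=\inf_{u\in[0,2]}R(h_{q,u})$. That step is not optional, however. The paper invokes \cite[Lemma~3.3]{engelund2026optimal} only for the agnostic rate of threshold ERM, $R(h_{q,\widehat u})\le Q+C_T(\sqrt{Qv}+v)$ with $v=\ln(12/\eta)/n$. It then uses $C_T\sqrt{Qv}\le\alpha Q+C_T^2v/(4\alpha)$, and compares $Q$ with $\Phi_\alpha(q)$ separately by your averaging argument. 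Your Step~3 asks the cited lemma for the whole conclusion, including the $\Phi_\alpha$ comparison, which is not how it is used. Reading the lemma as an ERM rate and combining it with your averaging step gives the paper's proof.

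The genuine gap is in your from-scratch substitute for the ERM step. A relative-deviation bound that uses only $\operatorname{VC}=1$ carries a $\ln n$ from the growth function. A union bound over $\Theta(1/\alpha)$ grid thresholds carries a $\ln(1/\alpha)$. The statement allows neither: the additive term must be exactly of order $\ln(12/\eta)/(\alpha n)$. Either extra factor would also survive the choice of $\alpha$ in the proof of Theorem~\ref{theorem:introduction_1} and spoil the first-order rate.

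Two further problems with that substitute:
\begin{itemize}
\item \cref{lem:multiplicative-concentration} concerns fixed adapted sequences, so it cannot be applied at a data-dependent grid without further work.
\item Passing from grid points to the ERM threshold, which ranges over all of $[0,2]$, needs a sandwiching argument that you do not give.
\end{itemize}
A log-free argument has to use the ordered structure. The disagreement region of $h_{q,u}$ and $h_{q,u^\star}$ is an interval in $w$ whose mass is monotone in $u$. One can therefore peel over geometrically increasing masses with summable failure probabilities. That is the content the cited lemma supplies.

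A smaller issue concerns your sentinel $(w,y)=(2,1)$. It gives $|w-y|=1$ while $\overline\ell_{\mathcal U}(q;Z)=0$, so the identity $\Phi_\alpha(q)=\mathbb E(|w-y|-\alpha)_+$ is false. A bound in terms of $\mathbb E(|w-y|-\alpha)_+$ is weaker than the claimed one, which is the wrong direction. The fix is to run the averaging with $\phi_\alpha$ directly. Sentinel examples are classified correctly at every $u\in[\alpha,1-\alpha]$, so they contribute $0$ to both sides; the paper notes this for its sentinels $-1$ (label $0$) and $3$ (label $1$). Relatedly, the standing conventions $\sup=0$ and $\inf=1$ are not threshold-compatible on $[0,2]$. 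Used as encodings, they would err at $u=0$ (label $0$) or on $u\in(1,2]$ (label $1$), which is why a sentinel is needed at all.
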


\begin{proof}
\circled{1} \textbf{Reduction to one-dimensional threshold ERM.}
Collapse each example to a single number, the score at which it starts
being misclassified. For $\mathcal U(x)\neq\varnothing$ set
\[
    a_q(x,y):=
    \begin{cases}
        \max_{z\in\mathcal U(x)}q(z),&y=0,\\[2pt]
        \min_{z\in\mathcal U(x)}q(z),&y=1,
    \end{cases}
\]
and for $\mathcal U(x)=\varnothing$ set $a_q(x,0):=-1$ and
$a_q(x,1):=3$. The extrema are attained because $q$ has finite range,
and the assumptions of Section~\ref{sec:preliminaries} make $a_q$
measurable. A direct check of the two labels gives, for every
$u\in[0,2]$,
\begin{equation}
    \ell_{\mathcal U}(h_{q,u};(x,y))
    =\mathds{1}\bigl\{\mathds{1}\{a_q(x,y)\geq u\}\neq y\bigr\}.
    \label{eq:threshold-reduction}
\end{equation}
Indeed, a label-$0$ example is robustly misclassified exactly when some
perturbation has score at least $u$, that is, when
$\max_zq(z)\geq u$; a label-$1$ example exactly when some perturbation
has score below $u$, that is, when $\min_zq(z)<u$. An example with
$\mathcal U(x)=\varnothing$ is never a robust mistake, and the sentinel
values $-1$ and $3$ reproduce this for every $u\in[0,2]$, since
$-1<u\leq3$. Any range of thresholds inside $(-1,3]$ would serve here,
and the range $[0,2]$ used by the learner is one such.

So selecting $\widehat u$ is ordinary threshold ERM on the pairs
$(a_q(Z),Y)$. As a function of $u$, the empirical error is piecewise
constant and left-continuous, with breakpoints only at the observed
values $a_q(Z_i)$, so its set of minimizers in $[0,2]$ is a finite union
of intervals closed on the right. The maximum $\widehat u$ is therefore
attained --- at $u=2$, or at one of the observed values lying in
$[0,2]$ --- and is a measurable function of $V$.

Put $Q:=\inf_{u\in[0,2]}R(h_{q,u})$ and $v:=\ln(12/\eta)/n$. The rules
$\{h_{q,u}:u\in[0,2]\}$ are thresholds of the single real feature
$a_q(Z)$, and so form a class of VC dimension one. An affine change of
variables carries the pairs $(a_q,u)$ into a common bounded interval, and
\cite[Lemma~3.3]{engelund2026optimal}, applied to the resulting
one-dimensional problem with labels rescaled to $\{-1,+1\}$, gives
\begin{equation}
    R(h_{q,\widehat u})\leq Q+C_T\bigl(\sqrt{Qv}+v\bigr)
       \leq(1+\alpha)Q+\frac{C v}\alpha,
    \label{eq:threshold-erm-rate}
\end{equation}
the second inequality by $C_T\sqrt{Qv}\leq\alpha Q+C_T^2v/(4\alpha)$.

\circled{2} \textbf{The best threshold is no worse than the truncated loss.}
It remains to compare $Q$ with $\Phi_\alpha(q)$, which we do by averaging
over thresholds rather than by exhibiting a good one. Fix an example $Z$.
By \eqref{eq:threshold-reduction}, the thresholds that err on $Z$ form an
interval: those at or below $a_q(Z)$ when $y=0$, and those strictly above
it when $y=1$. Measuring that interval inside $[\alpha,1-\alpha]$ gives,
in both cases and also when $\mathcal U(x)=\varnothing$,
\[
    \int_\alpha^{1-\alpha}\ell_{\mathcal U}(h_{q,u};Z)\,\mathrm du
       =\min\{\phi_\alpha(q;Z),\,1-2\alpha\}
       \leq\phi_\alpha(q;Z).
\]
Taking expectations over $Z$ and exchanging the order of integration,
\[
    (1-2\alpha)\,Q
    \leq\int_\alpha^{1-\alpha}R(h_{q,u})\,\mathrm du
    \leq\Phi_\alpha(q),
\]
since the average of $R(h_{q,u})$ over an interval of length $1-2\alpha$
is at least its infimum. Substituting $Q\leq\Phi_\alpha(q)/(1-2\alpha)$
into \eqref{eq:threshold-erm-rate} and using
$(1+\alpha)/(1-2\alpha)\leq1+4\alpha$ for $\alpha\leq1/8$ gives
\eqref{eq:truncated-threshold-selection}.
\end{proof}

We can now assemble the three steps.

\begin{proof}[Proof of Theorem~\ref{theorem:introduction_1}]
The degenerate cases were handled in Section~\ref{sec:agnostic-learner},
so assume $d\geq1$ and $m\geq6$. Throughout, $C$ denotes a universal
constant whose value may grow from one display to the next.

Fix $h\in\mathcal H$ and a deterministic $\alpha\in(0,1/8]$; both are
chosen at the end, and neither is available to the learner. With $k$ and
$b_t$ as in Algorithm~\ref{alg:learned-threshold}, write
$L:=\ln(6/\delta)$ and
\[
    X_t:=\phi_\alpha(b_t;Z_{t+1}),
    \qquad \rho_t:=\Phi_\alpha(b_t),
    \qquad k\leq t\leq2k-1;
\]
sums over $t$ below run over this range unless stated otherwise. Since
$0\leq\phi_\alpha\leq1$ we have $X_t\in[0,1]$, so
Lemma~\ref{lem:multiplicative-concentration} applies to these variables.

\circled{1} \textbf{Reading the prefix losses forwards.}
Let $\mathcal F_j:=\sigma(Z_1,\ldots,Z_{k+j})$ and $Y_j:=X_{k+j-1}$ for
$j=1,\ldots,k$. Then $Y_j$ is $\mathcal F_j$-measurable, and since
$b_{k+j-1}$ is determined by $Z_1,\ldots,Z_{k+j-1}$ while $Z_{k+j}$ is
independent of them,
$\mathbb E[Y_j\mid\mathcal F_{j-1}]=\rho_{k+j-1}$. In this direction each
loss is an unbiased estimate of its population value.

\circled{2} \textbf{Reading them backwards.}
Now reveal the sample in the opposite order, forgetting at each step
which of the first few examples was which. Every quantity in this step
depends on the sample only through $Z_1,\ldots,Z_{2k}$, so we may regard
these as the coordinates of the canonical product space
$(\mathcal X\times\mathcal Y)^{2k}$ carrying $\mathcal D^{\otimes2k}$.
For $s=k,\ldots,2k$, let $\mathfrak S_s$ act by permuting the first $s$
coordinates and fixing the rest, and let $\mathcal G_s$ be the
sub-$\sigma$-field of events invariant under $\mathfrak S_s$. Since
$\mathfrak S_k\subseteq\cdots\subseteq\mathfrak S_{2k}$, a larger group
leaves fewer events invariant, so
$\mathcal G_{2k}\subseteq\cdots\subseteq\mathcal G_k$. Moreover $X_t$ is
$\mathcal G_t$-measurable, because $b_t$ depends on $Z_1,\ldots,Z_t$
only through their multiset, while $Z_{t+1}$ is fixed by
$\mathfrak S_t$.

The group is finite and $\mathcal D^{\otimes2k}$ is invariant under it,
so conditioning is averaging over the group: for every integrable $f$,
\begin{equation}
    \mathbb E[f\mid\mathcal G_s](z)
       =\frac1{s!}\sum_{\pi\in\mathfrak S_s}f(\pi z).
    \label{eq:group-averaging}
\end{equation}
The right-hand side is invariant, hence $\mathcal G_s$-measurable, and
for every invariant event $A$ and every $\pi\in\mathfrak S_s$ we have
$\int_Af(\pi z)\,\mathrm dP=\int_{\pi A}f\,\mathrm dP=\int_Af\,\mathrm dP$,
which identifies it as the conditional expectation. This needs no
regular conditional distribution. Now apply
\eqref{eq:group-averaging} with $s=t+1$, collecting for each
$i\in[t+1]$ the $t!$ permutations that move $Z_i$ into coordinate
$t+1$; each of them leaves the first $t$ coordinates carrying the
multiset $(S_{\leq t+1})^{-i}$, so
\begin{align}
    \mu_t:=\mathbb E[X_t\mid\mathcal G_{t+1}]
      &=\frac1{t+1}\sum_{i=1}^{t+1}
           \phi_\alpha\bigl(b_{(S_{\leq t+1})^{-i}};Z_i\bigr)
      \notag\\
      &\leq\frac1{t+1}\left(
             r(S_{\leq t+1})+\frac{Cd}{\alpha}\right)
      \leq\frac1{t+1}\sum_{j=1}^{t+1}\ell_{\mathcal U}(h;Z_j)
              +\frac{Cd}{\alpha(t+1)},
    \label{eq:reverse-drift}
\end{align}
by Lemma~\ref{lem:truncated-loo} and the definition of $r$ as a minimum
over the class. To fit this into
Lemma~\ref{lem:multiplicative-concentration}, relabel: take
$\mathcal F_0':=\mathcal G_{2k}$, and
$\mathcal F_j':=\mathcal G_{2k-j}$ with $Y_j':=X_{2k-j}$ for
$j=1,\ldots,k$. Then $(\mathcal F_j')_{j=0}^k$ is an increasing
filtration, $Y_j'$ is $\mathcal F_j'$-measurable, and
$\mathbb E[Y_j'\mid\mathcal F_{j-1}']=\mu_{2k-j}$.

\circled{3} \textbf{Controlling the average truncated loss.}
Summing \eqref{eq:reverse-drift} over $t$ collects the losses of $h$ with
weights
\[
    c_j:=\sum_{t=k}^{2k-1}\frac{\mathds{1}\{j\leq t+1\}}{t+1},
    \qquad j\in[2k],
\]
which satisfy $0\leq c_j\leq\sum_t(t+1)^{-1}\leq1$ and
$\sum_{j=1}^{2k}c_j=k$. Write
$W:=\sum_{j=1}^{2k}c_j\,\ell_{\mathcal U}(h;Z_j)$ for the weighted loss
they produce; its summands are independent, lie in $[0,1]$, and have
means $c_jR(h)$, so $\mathbb E W=kR(h)$.

Apply Lemma~\ref{lem:multiplicative-concentration} with $\lambda=\alpha$
three times: its second inequality in the forward order of
\circled{1}, its first inequality in the reverse order of \circled{2},
and its first inequality to the summands of $W$. Each fails with
probability at most $\mathrm{e}^{-L}=\delta/6$, so by a union bound all
three hold at once with probability at least $1-\delta/2$:
\begin{align*}
    \sum\nolimits_t\rho_t \leq(1+\alpha)\bigl(\sum\nolimits_tX_t+L/\alpha\bigr), \qquad
    \sum\nolimits_tX_t, \leq(1+\alpha)\sum\nolimits_t\mu_t+L/\alpha,
\end{align*}
\begin{equation*}
        W \leq(1+\alpha)\,kR(h)+L/\alpha.
\end{equation*}
Summing \eqref{eq:reverse-drift} gives $\sum_t\mu_t\leq W+Cd/\alpha$.
Chaining the four inequalities, dividing by $k$, and using
$1+\alpha\leq2$ to absorb constants,
\[
    \frac1k\sum_t\rho_t\leq(1+\alpha)^3R(h)+C\frac{d+L}{\alpha k}.
\]
Finally, $\phi_\alpha$ is convex in its score argument, so Jensen's
inequality applied to $g_S=\frac1k\sum_tb_t$ gives
\begin{equation}
    \Phi_\alpha(g_S)
       \leq\frac1k\sum_t\rho_t
       \leq(1+\alpha)^3R(h)+C\frac{d+L}{\alpha k}.
    \label{eq:prefix-bound}
\end{equation}

\circled{4} \textbf{Selecting a threshold.}
Condition on $Z_1,\ldots,Z_{2k}$. The validation block $V$ is independent
of them, and $g_S$ is a fixed score with finite range, so
Lemma~\ref{lem:truncated-threshold-selection} applies with $q=g_S$,
$n=k$ and $\eta=\delta/2$. Combining it with \eqref{eq:prefix-bound} and
taking a union bound, with probability at least $1-\delta$,
\[
    R(\widehat h)
      \leq(1+4\alpha)(1+\alpha)^3R(h)+C\frac B\alpha,
    \qquad
    B:=\frac{d+\ln(24/\delta)}k .
\]
For $\alpha\leq1/8$ we have $(1+\alpha)^3\leq1+4\alpha$ and
$(1+4\alpha)^2\leq1+10\alpha$, so the leading factor is at most
$1+10\alpha$ and
\[
    R(\widehat h)\leq R(h)+C\left(\alpha R(h)+\frac B\alpha\right).
\]

\circled{5} \textbf{Optimizing the truncation level.}
The two error terms are balanced by
\[
    \alpha:=\frac18\sqrt{\frac B{B+R(h)}}\in(0,1/8],
\]
which is deterministic, as required. This choice gives
$\alpha R(h)\leq\sqrt{BR(h)}/8$ and
$B/\alpha=8\sqrt{B(B+R(h))}\leq8\bigl(B+\sqrt{BR(h)}\bigr)$. Since
$k\geq m/6$, and since $d\geq1$ makes
$\ln(24/\delta)\leq4\bigl(d+\log(1/\delta)\bigr)$, we obtain
\begin{equation}
    R(\widehat h)\leq R(h)+C\left(
       \sqrt{\frac{R(h)\bigl(d+\log(1/\delta)\bigr)}{m}}
       +\frac{d+\log(1/\delta)}m\right).
    \label{eq:bound-at-h}
\end{equation}

\circled{6} \textbf{Passing to the infimum.}
Inequality \eqref{eq:bound-at-h} holds for each fixed $h$, whereas the
theorem asks for $R^\star$, which need not be attained. Choose a
deterministic sequence $h_1,h_2,\ldots\in\mathcal H$ with
$R(h_j)\downarrow R^\star$, and let $E_j$ be the event
\eqref{eq:bound-at-h} for $h=h_j$. The right-hand side of
\eqref{eq:bound-at-h} is nondecreasing in $R(h)$, so
$E_1\supseteq E_2\supseteq\cdots$, and each $E_j$ has probability at
least $1-\delta$. By continuity of probability,
$\mathbb P[\bigcap_jE_j]\geq1-\delta$, and on $\bigcap_jE_j$ the bound
\eqref{eq:bound-at-h} holds with $R^\star$ in place of $R(h)$.
\end{proof}

\subsection{Minimal Measurability Requirements} \label{sec:minimal-measurability}

We use the standing measurability assumptions of Section~\ref{sec:preliminaries}. The fixed $\mathsf{RERM}$ has jointly measurable evaluation, so finite averaging makes $b_T$ and $g_S$ jointly measurable in the sample and query. The assumed joint measurability of their extrema over $\mathcal U(x)$ then makes the truncated losses and their population values measurable. Moreover, Lemma~\ref{lem:truncated-threshold-selection} shows that $\widehat u$ is selected from the endpoint $2$ and the finitely many validation values $a_{g_S}(Z)$ lying in $[0,2]$. These candidates and their empirical robust errors are measurable, so choosing the largest minimizer makes $\widehat u$ measurable in the full sample. Thus, $(S,x)\mapsto\widehat h(x)$ is jointly measurable, and \eqref{eq:threshold-reduction} gives the same conclusion for its robust loss, including when $\mathcal U(x)=\varnothing$.

Importantly, the supremum over $\mathcal H$ in Lemma~\ref{lem:truncated-loo} requires no additional class-level measurability assumption. There, the sample and witnesses are fixed before the random subset is drawn. On these witnesses, $\mathcal H$ induces only finitely many binary error patterns, so the Rademacher supremum is simply a maximum over a finite set, with randomness only in the finitely many signs. This remains true when the lemma is invoked in the proof of Theorem~\ref{theorem:introduction_1}, where it is applied to the random prefix $S_{\leq t+1}$, whose witnesses do vary with the sample. Lemma~\ref{lem:truncated-loo} is a deterministic inequality, valid for every fixed sample, so each such invocation is a pointwise application of it and never a statement about a random supremum. In particular, the argument requires no measurable choice of witnesses as the sample varies. For the same reason, $r$ needs no measurability argument of its own: $r(S_{\leq t+1})$ appears only as an intermediate term, immediately bounded by $\sum_{j}\ell_{\mathcal U}(h;Z_j)$ for the fixed $h$ under consideration.

This becomes especially transparent in the classical specialization $\mathcal U(x)=\{x\}$. The robust extrema then reduce to ordinary score evaluation, and jointly measurable evaluation of the fixed deterministic, permutation-invariant ERM suffices for the entire argument. Thus, the proof requires measurability only of the particular learner and losses that it constructs, rather than of random suprema over the whole class. In particular, no image-admissibility or class-level measurability assumption is needed. This provides a particularly direct result on the optimal PAC bounds under weak learner-level measurability assumptions. To the best of our knowledge, previous optimal PAC results for general $\operatorname{VC}$ classes have not explicitly isolated the result under such a weak measurability requirement. We emphasize, however, that this distinction is not an inherent limitation of earlier PAC approaches: with suitable measurable selections, several existing arguments can also be formulated under comparably weak assumptions. The advantage here is that the binomial-bagging proof makes this minimal dependence particularly explicit. For related measurability issues in classical PAC learning, see \cite{pestov2011pac}.

%% file: Main/Acknowledgments.tex
\section*{Acknowledgments}

Steve Hanneke acknowledges support by grant no.\ 2024243 from the United States - Israel Binational Science Foundation (BSF).

%% file: References.bib
@article{szegedy2013intriguing,
  title={Intriguing properties of neural networks},
  author={Szegedy, Christian and Zaremba, Wojciech and Sutskever, Ilya and Bruna, Joan and Erhan, Dumitru and Goodfellow, Ian and Fergus, Rob},
  journal={arXiv preprint arXiv:1312.6199},
  year={2013}
}

@inproceedings{dalvi2004adversarial,
  title     = {Adversarial Classification},
  author    = {Dalvi, Nilesh N. and Domingos, Pedro M. and Mausam and Sanghai, Sumit K. and Verma, Deepak},
  booktitle = {Proceedings of the Tenth {ACM} {SIGKDD} International Conference on Knowledge Discovery and Data Mining},
  series    = {{KDD} '04},
  pages     = {99--108},
  publisher = {Association for Computing Machinery},
  year      = {2004},
  doi       = {10.1145/1014052.1014066}
}

@inproceedings{lowd2005adversarial,
  title     = {Adversarial Learning},
  author    = {Lowd, Daniel and Meek, Christopher},
  booktitle = {Proceedings of the Eleventh {ACM} {SIGKDD} International Conference on Knowledge Discovery and Data Mining},
  series    = {{KDD} '05},
  pages     = {641--647},
  publisher = {Association for Computing Machinery},
  year      = {2005},
  doi       = {10.1145/1081870.1081950}
}

@inproceedings{globerson2006nightmare,
  title     = {Nightmare at Test Time: Robust Learning by Feature Deletion},
  author    = {Globerson, Amir and Roweis, Sam T.},
  booktitle = {Proceedings of the 23rd International Conference on Machine Learning},
  series    = {{ICML} '06},
  pages     = {353--360},
  publisher = {Association for Computing Machinery},
  year      = {2006},
  doi       = {10.1145/1143844.1143889}
}

@inproceedings{kolcz2009feature,
  title     = {Feature Weighting for Improved Classifier Robustness},
  author    = {Ko{\l}cz, Aleksander and Teo, Choon Hui},
  booktitle = {Proceedings of the Sixth Conference on Email and Anti-Spam ({CEAS})},
  pages     = {1--8},
  year      = {2009}
}

@inproceedings{goodfellow2015explaining,
  title     = {Explaining and Harnessing Adversarial Examples},
  author    = {Goodfellow, Ian J. and Shlens, Jonathon and Szegedy, Christian},
  booktitle = {International Conference on Learning Representations (ICLR)},
  year      = {2015}
}

@inproceedings{carlini2018audio,
  title     = {Audio adversarial examples: Targeted attacks on speech-to-text},
  author    = {Carlini, Nicholas and Wagner, David},
  booktitle = {2018 IEEE Security and Privacy Workshops (SPW)},
  pages     = {1--7},
  publisher = {IEEE},
  year      = {2018}
}

@inproceedings{carlini2016hidden,
  title     = {Hidden voice commands},
  author    = {Carlini, Nicholas and Mishra, Pratyush and Vaidya, Tavish and Zhang, Yuankai and Sherr, Micah and Shields, Clay and Wagner, David and Zhou, Wenchao},
  booktitle = {25th $\{$USENIX$\}$ Security Symposium ($\{$USENIX$\}$ Security 16)},
  pages     = {513--530},
  year      = {2016}
}

@inproceedings{lin2017tactics,
  title     = {Tactics of Adversarial Attack on Deep Reinforcement Learning Agents},
  author    = {Lin, Yen-Chen and Hong, Zhang-Wei and Liao, Yuan-Hong and Shih, Meng-Li and Liu, Ming-Yu and Sun, Min},
  booktitle = {Proceedings of the Twenty-Sixth International Joint Conference on Artificial Intelligence (IJCAI)},
  pages     = {3756--3762},
  year      = {2017},
  doi       = {10.24963/ijcai.2017/525}
}

@inproceedings{tretschk2018sequential,
  title     = {Sequential Attacks on Agents for Long-Term Adversarial Goals},
  author    = {Tretschk, Edgar and Oh, Seong Joon and Fritz, Mario},
  booktitle = {ACM Computer Science in Cars Symposium---Future Challenges in Artificial Intelligence and Security for Autonomous Vehicles (CSCS)},
  year      = {2018}
}

@misc{perez2022ignore,
  title         = {Ignore Previous Prompt: Attack Techniques for Language Models},
  author        = {Perez, F{\'a}bio and Ribeiro, Ian},
  year          = {2022},
  note          = {NeurIPS ML Safety Workshop},
  eprint        = {2211.09527},
  archiveprefix = {arXiv},
  primaryclass  = {cs.CL}
}

@inproceedings{greshake2023not,
  title     = {Not What You've Signed Up For: Compromising Real-World {LLM}-Integrated Applications with Indirect Prompt Injection},
  author    = {Greshake, Kai and Abdelnabi, Sahar and Mishra, Shailesh and Endres, Christoph and Holz, Thorsten and Fritz, Mario},
  booktitle = {Proceedings of the 16th {ACM} Workshop on Artificial Intelligence and Security},
  series    = {{AISec} '23},
  pages     = {79--90},
  publisher = {Association for Computing Machinery},
  year      = {2023}
}

@inproceedings{wei2023jailbroken,
  title     = {Jailbroken: How Does {LLM} Safety Training Fail?},
  author    = {Wei, Alexander and Haghtalab, Nika and Steinhardt, Jacob},
  booktitle = {Advances in Neural Information Processing Systems},
  volume    = {36},
  pages     = {80079--80110},
  year      = {2023}
}

@misc{zou2023universal,
  title         = {Universal and Transferable Adversarial Attacks on Aligned Language Models},
  author        = {Zou, Andy and Wang, Zifan and Carlini, Nicholas and Nasr, Milad and Kolter, J. Zico and Fredrikson, Matt},
  year          = {2023},
  eprint        = {2307.15043},
  archiveprefix = {arXiv},
  primaryclass  = {cs.CL}
}

@InProceedings{pmlr-v97-cohen19c,
    title = {Certified Adversarial Robustness via Randomized Smoothing},
    author = {Cohen, Jeremy and Rosenfeld, Elan and Kolter, Zico},
    booktitle = {Proceedings of the 36th International Conference on Machine Learning},
    pages = {1310--1320},
    year = {2019},
    editor = {Kamalika Chaudhuri and Ruslan Salakhutdinov}, volume = {97}, series = {Proceedings of Machine Learning Research}, month = {09--15 Jun}, publisher = {PMLR}
}

@article{carlini2016defensive,
  title={Defensive distillation is not robust to adversarial examples},
  author={Carlini, Nicholas and Wagner, David},
  journal={arXiv preprint arXiv:1607.04311},
  year={2016}
}

@article{carlini2017magnet,
  title={Magnet and" efficient defenses against adversarial attacks" are not robust to adversarial examples},
  author={Carlini, Nicholas and Wagner, David},
  journal={arXiv preprint arXiv:1711.08478},
  year={2017}
}

@misc{carlini2017evaluating,
      title={Towards Evaluating the Robustness of Neural Networks}, 
      author={Nicholas Carlini and David Wagner},
      year={2017},
      eprint={1608.04644},
      archivePrefix={arXiv},
      primaryClass={cs.CR}
}

@article{athalye2018robustness,
  title={On the robustness of the cvpr 2018 white-box adversarial example defenses},
  author={Athalye, Anish and Carlini, Nicholas},
  journal={arXiv preprint arXiv:1804.03286},
  year={2018}
}

@inproceedings{athalye2018obfuscated,
  title={Obfuscated gradients give a false sense of security: Circumventing defenses to adversarial examples},
  author={Athalye, Anish and Carlini, Nicholas and Wagner, David},
  booktitle={International Conference on Machine Learning},
  pages={274--283},
  year={2018},
  organization={PMLR}
}

@article{carlini2019ami,
  title={Is ami (attacks meet interpretability) robust to adversarial examples?},
  author={Carlini, Nicholas},
  journal={arXiv preprint arXiv:1902.02322},
  year={2019}
}

@article{carlini2019evaluating,
  title={On evaluating adversarial robustness},
  author={Carlini, Nicholas and Athalye, Anish and Papernot, Nicolas and Brendel, Wieland and Rauber, Jonas and Tsipras, Dimitris and Goodfellow, Ian and Madry, Aleksander and Kurakin, Alexey},
  journal={arXiv preprint arXiv:1902.06705},
  year={2019}
}

@misc{tramer2020adaptive,
      title={On Adaptive Attacks to Adversarial Example Defenses}, 
      author={Florian Tramer and Nicholas Carlini and Wieland Brendel and Aleksander Madry},
      year={2020},
      eprint={2002.08347},
      archivePrefix={arXiv},
      primaryClass={cs.LG}
}

@misc{kaur2019perceptuallyaligned,
      title={Are Perceptually-Aligned Gradients a General Property of Robust Classifiers?}, 
      author={Simran Kaur and Jeremy Cohen and Zachary C. Lipton},
      year={2019},
      eprint={1910.08640},
      archivePrefix={arXiv},
      primaryClass={cs.LG}
}

@misc{santurkar2019image,
      title={Image Synthesis with a Single (Robust) Classifier}, 
      author={Shibani Santurkar and Dimitris Tsipras and Brandon Tran and Andrew Ilyas and Logan Engstrom and Aleksander Madry},
      year={2019},
      eprint={1906.09453},
      archivePrefix={arXiv},
      primaryClass={cs.CV}
}

@article{engstrom2019adversarial,
  title={Adversarial robustness as a prior for learned representations},
  author={Engstrom, Logan and Ilyas, Andrew and Santurkar, Shibani and Tsipras, Dimitris and Tran, Brandon and Madry, Aleksander},
  journal={arXiv preprint arXiv:1906.00945},
  year={2019}
}

@misc{zhang2019interpreting,
      title={Interpreting Adversarially Trained Convolutional Neural Networks}, 
      author={Tianyuan Zhang and Zhanxing Zhu},
      year={2019},
      eprint={1905.09797},
      archivePrefix={arXiv},
      primaryClass={cs.LG}
}

@misc{tsipras2019robustness,
      title={Robustness May Be at Odds with Accuracy}, 
      author={Dimitris Tsipras and Shibani Santurkar and Logan Engstrom and Alexander Turner and Aleksander Madry},
      year={2019},
      eprint={1805.12152},
      archivePrefix={arXiv},
      primaryClass={stat.ML}
}

@article{salehi2020arae,
  title={Arae: Adversarially robust training of autoencoders improves novelty detection},
  author={Salehi, Mohammadreza and Arya, Atrin and Pajoum, Barbod and Otoofi, Mohammad and Shaeiri, Amirreza and Rohban, Mohammad Hossein and Rabiee, Hamid R},
  journal={arXiv preprint arXiv:2003.05669},
  year={2020}
}

@misc{salman2020adversarially,
      title={Do Adversarially Robust ImageNet Models Transfer Better?}, 
      author={Hadi Salman and Andrew Ilyas and Logan Engstrom and Ashish Kapoor and Aleksander Madry},
      year={2020},
      eprint={2007.08489},
      archivePrefix={arXiv},
      primaryClass={cs.CV}
}

@misc{kurakin2016adversarial,
  title={Adversarial examples in the physical world},
  author={Kurakin, Alexey and Goodfellow, Ian and Bengio, Samy and others},
  year={2016}
}

@article{madry2017towards,
  title={Towards deep learning models resistant to adversarial attacks},
  author={Madry, Aleksander and Makelov, Aleksandar and Schmidt, Ludwig and Tsipras, Dimitris and Vladu, Adrian},
  journal={arXiv preprint arXiv:1706.06083},
  year={2017}
}

@misc{gowal2020uncovering,
      title={Uncovering the Limits of Adversarial Training against Norm-Bounded Adversarial Examples}, 
      author={Sven Gowal and Chongli Qin and Jonathan Uesato and Timothy Mann and Pushmeet Kohli},
      year={2020},
      eprint={2010.03593},
      archivePrefix={arXiv},
      primaryClass={stat.ML}
}

@misc{xie2020smooth,
      title={Smooth Adversarial Training}, 
      author={Cihang Xie and Mingxing Tan and Boqing Gong and Alan Yuille and Quoc V. Le},
      year={2020},
      eprint={2006.14536},
      archivePrefix={arXiv},
      primaryClass={cs.LG}
}

@inproceedings{rice2020overfitting,
  title={Overfitting in adversarially robust deep learning},
  author={Rice, Leslie and Wong, Eric and Kolter, Zico},
  booktitle={International Conference on Machine Learning},
  pages={8093--8104},
  year={2020},
  organization={PMLR}
}

@article{schmidt2018adversarially,
  title={Adversarially robust generalization requires more data},
  author={Schmidt, Ludwig and Santurkar, Shibani and Tsipras, Dimitris and Talwar, Kunal and Madry, Aleksander},
  journal={Advances in neural information processing systems},
  volume={31},
  year={2018}
}

@misc{vapnik1974theory,
  title={Theory of pattern recognition},
  author={Vapnik, Vladimir and Chervonenkis, Alexey},
  year={1974},
  publisher={Nauka, Moscow}
}

@article{sauer1972density,
  title={On the density of families of sets},
  author={Sauer, Norbert},
  journal={Journal of Combinatorial Theory, Series A},
  volume={13},
  number={1},
  pages={145--147},
  year={1972},
  publisher={Elsevier}
}

@article{shelah1972combinatorial,
  title={A combinatorial problem; stability and order for models and theories in infinitary languages},
  author={Shelah, Saharon},
  journal={Pacific Journal of Mathematics},
  volume={41},
  number={1},
  pages={247--261},
  year={1972},
  publisher={Mathematical Sciences Publishers}
}

@article{valiant1984theory,
  title={A theory of the learnable},
  author={Valiant, Leslie G},
  journal={Communications of the ACM},
  volume={27},
  number={11},
  pages={1134--1142},
  year={1984},
  publisher={ACM New York, NY, USA}
}

@article{assouad1983densite,
  author  = {Assouad, Patrick},
  title   = {Densit{\'e} et dimension},
  journal = {Annales de l'Institut Fourier},
  volume  = {33},
  number  = {3},
  pages   = {233--282},
  year    = {1983},
  doi     = {10.5802/aif.938}
}

@article{kleer2023primal,
  author  = {Kleer, Pieter and Simon, Hans},
  title   = {Primal and dual combinatorial dimensions},
  journal = {Discrete Applied Mathematics},
  volume  = {327},
  pages   = {185--196},
  year    = {2023},
  doi     = {10.1016/j.dam.2022.11.010}
}

@book{devroye2001combinatorial,
  title     = {Combinatorial Methods in Density Estimation},
  author    = {Devroye, Luc and Lugosi, G{\'a}bor},
  series    = {Springer Series in Statistics},
  publisher = {Springer},
  address   = {New York},
  year      = {2001},
  doi       = {10.1007/978-1-4613-0125-7}
}

@book{devroye1996probabilistic,
  title     = {A Probabilistic Theory of Pattern Recognition},
  author    = {Devroye, Luc and Gy{\"o}rfi, L{\'a}szl{\'o} and Lugosi, G{\'a}bor},
  year      = {1996},
  publisher = {Springer},
  address   = {New York}
}

@article{hanneke2016optimal,
  title={The optimal sample complexity of PAC learning},
  author={Hanneke, Steve},
  journal={Journal of Machine Learning Research},
  volume={17},
  number={38},
  pages={1--15},
  year={2016}
}

@article{moran2016sample,
  title={Sample compression schemes for VC classes},
  author={Moran, Shay and Yehudayoff, Amir},
  journal={Journal of the ACM (JACM)},
  volume={63},
  number={3},
  pages={1--10},
  year={2016},
  publisher={ACM New York, NY, USA}
}

@article{david2016supervised,
  title={Supervised learning through the lens of compression},
  author={David, Ofir and Moran, Shay and Yehudayoff, Amir},
  journal={Advances in Neural Information Processing Systems},
  volume={29},
  year={2016}
}

@inproceedings{bousquet2020proper,
  title={Proper learning, Helly number, and an optimal SVM bound},
  author={Bousquet, Olivier and Hanneke, Steve and Moran, Shay and Zhivotovskiy, Nikita},
  booktitle={Conference on Learning Theory},
  pages={582--609},
  year={2020},
  organization={PMLR}
}

@article{hanneke2021universal,
  title   = {Universal {Bayes} consistency in metric spaces},
  author  = {Hanneke, Steve and Kontorovich, Aryeh and
             Sabato, Sivan and Weiss, Roi},
  journal = {The Annals of Statistics},
  volume  = {49},
  number  = {4},
  pages   = {2129--2150},
  year    = {2021},
  doi     = {10.1214/20-AOS2029}
}

@InProceedings{pmlr-v178-hopkins22a,
  title = 	 {Realizable Learning is All You Need},
  author =       {Hopkins, Max and Kane, Daniel M. and Lovett, Shachar and Mahajan, Gaurav},
  booktitle = 	 {Proceedings of Thirty Fifth Conference on Learning Theory},
  pages = 	 {3015--3069},
  year = 	 {2022},
  editor = 	 {Loh, Po-Ling and Raginsky, Maxim},
  volume = 	 {178},
  series = 	 {Proceedings of Machine Learning Research},
  month = 	 {02--05 Jul},
  publisher =    {PMLR}
}

@inproceedings{larsen2023bagging,
  title={Bagging is an optimal PAC learner},
  author={Larsen, Kasper Green},
  booktitle={The Thirty Sixth Annual Conference on Learning Theory},
  pages={450--468},
  year={2023},
  organization={PMLR}
}

@inproceedings{aden2023optimal,
  title={Optimal pac bounds without uniform convergence},
  author={Aden-Ali, Ishaq and Cherapanamjeri, Yeshwanth and Shetty, Abhishek and Zhivotovskiy, Nikita},
  booktitle={2023 IEEE 64th Annual Symposium on Foundations of Computer Science (FOCS)},
  pages={1203--1223},
  year={2023},
  organization={IEEE}
}

@inproceedings{hanneke2024revisiting,
  title={Revisiting agnostic PAC learning},
  author={Hanneke, Steve and Larsen, Kasper Green and Zhivotovskiy, Nikita},
  booktitle={2024 IEEE 65th Annual Symposium on Foundations of Computer Science (FOCS)},
  pages={1968--1982},
  year={2024},
  organization={IEEE}
}

@article{asilis2026agnostic,
  title={On Agnostic PAC Learning in the Small Error Regime},
  author={Asilis, Julian and M{\o}ller H{\o}gsgaard, Mikael and Velegkas, Grigoris},
  journal={Advances in Neural Information Processing Systems},
  volume={38},
  pages={123346--123388},
  year={2026}
}

@article{rawal2026majority,
  title={Majority-of-Three is Optimal},
  author={Rawal, Divit and Zhivotovskiy, Nikita},
  journal={arXiv preprint arXiv:2606.13614},
  year={2026}
}

@article{engelund2026optimal,
  title={An Optimal Agnostic {PAC} Algorithm},
  author={Engelund Mathiasen, Markus and Qian, Jian and Zhivotovskiy, Nikita},
  journal={arXiv preprint arXiv:2608.06363},
  year={2026}
}

@inproceedings{karbasi2024impossibility,
  title={The impossibility of parallelizing boosting},
  author={Karbasi, Amin and Larsen, Kasper Green},
  booktitle={International Conference on Algorithmic Learning Theory},
  pages={635--653},
  year={2024},
  organization={PMLR}
}

@inproceedings{hanneke2025representation,
  title={Representation Preserving Multiclass Agnostic to Realizable Reduction.},
  author={Hanneke, Steve and Meng, Qinglin and Shaeiri, Amirreza},
  booktitle={ICML},
  year={2025}
}

@inproceedings{pestov2011pac,
  title={PAC learnability versus VC dimension: a footnote to a basic result of statistical learning},
  author={Pestov, Vladimir},
  booktitle={The 2011 International Joint Conference on Neural Networks},
  pages={1141--1145},
  year={2011},
  organization={IEEE}
}

@InProceedings{pmlr-v99-montasser19a,
  title = 	 {VC Classes are Adversarially Robustly Learnable, but Only Improperly},
  author =       {Montasser, Omar and Hanneke, Steve and Srebro, Nathan},
  booktitle = 	 {Proceedings of the Thirty-Second Conference on Learning Theory},
  pages = 	 {2512--2530},
  year = 	 {2019},
  editor = 	 {Beygelzimer, Alina and Hsu, Daniel},
  volume = 	 {99},
  series = 	 {Proceedings of Machine Learning Research},
  month = 	 {25--28 Jun},
  publisher =    {PMLR}
}

@InProceedings{pmlr-v134-montasser21a,
  title = 	 {Adversarially Robust Learning with Unknown Perturbation Sets},
  author =       {Montasser, Omar and Hanneke, Steve and Srebro, Nathan},
  booktitle = 	 {Proceedings of Thirty Fourth Conference on Learning Theory},
  pages = 	 {3452--3482},
  year = 	 {2021},
  editor = 	 {Belkin, Mikhail and Kpotufe, Samory},
  volume = 	 {134},
  series = 	 {Proceedings of Machine Learning Research},
  month = 	 {15--19 Aug},
  publisher =    {PMLR}
}

@article{montasser2020reducing,
  title={Reducing adversarially robust learning to non-robust pac learning},
  author={Montasser, Omar and Hanneke, Steve and Srebro, Nati},
  journal={Advances in Neural Information Processing Systems},
  volume={33},
  pages={14626--14637},
  year={2020}
}

@InProceedings{pmlr-v151-montasser22a,
  title = 	 { Transductive Robust Learning Guarantees },
  author =       {Montasser, Omar and Hanneke, Steve and Srebro, Nathan},
  booktitle = 	 {Proceedings of The 25th International Conference on Artificial Intelligence and Statistics},
  pages = 	 {11461--11471},
  year = 	 {2022},
  editor = 	 {Camps-Valls, Gustau and Ruiz, Francisco J. R. and Valera, Isabel},
  volume = 	 {151},
  series = 	 {Proceedings of Machine Learning Research},
  month = 	 {28--30 Mar},
  publisher =    {PMLR}
}

@article{montasser2022adversarially,
  title={Adversarially robust learning: A generic minimax optimal learner and characterization},
  author={Montasser, Omar and Hanneke, Steve and Srebro, Nati},
  journal={Advances in Neural Information Processing Systems},
  volume={35},
  pages={37458--37470},
  year={2022}
}

@misc{montasser2026baggingrobustlylearnsvc,
      title={Bagging Robustly Learns VC Classes with Linear Sample Complexity}, 
      author={Omar Montasser},
      year={2026},
      eprint={2608.13514},
      archivePrefix={arXiv},
      primaryClass={stat.ML}
}

@inproceedings{attias2019improved,
  title={Improved generalization bounds for robust learning},
  author={Attias, Idan and Kontorovich, Aryeh and Mansour, Yishay},
  booktitle={Algorithmic Learning Theory},
  pages={162--183},
  year={2019},
  organization={PMLR}
}

@article{attias2022characterization,
  title={A characterization of semi-supervised adversarially robust pac learnability},
  author={Attias, Idan and Hanneke, Steve and Mansour, Yishay},
  journal={Advances in Neural Information Processing Systems},
  volume={35},
  pages={23646--23659},
  year={2022}
}

@inproceedings{attias2023adversarially,
  title={Adversarially robust pac learnability of real-valued functions},
  author={Attias, Idan and Hanneke, Steve},
  booktitle={International Conference on Machine Learning},
  pages={1172--1199},
  year={2023},
  organization={PMLR}
}

@InProceedings{pmlr-v272-attias25a,
  title = 	 {Sample Compression Scheme Reductions},
  author =       {Attias, Idan and Hanneke, Steve and Ramaswami, Arvind},
  booktitle = 	 {Proceedings of The 36th International Conference on Algorithmic Learning Theory},
  pages = 	 {134--162},
  year = 	 {2025},
  editor = 	 {Kamath, Gautam and Loh, Po-Ling},
  volume = 	 {272},
  series = 	 {Proceedings of Machine Learning Research},
  month = 	 {24--27 Feb},
  publisher =    {PMLR}
}

@article{breiman1996bagging,
  title={Bagging predictors},
  author={Breiman, Leo},
  journal={Machine learning},
  volume={24},
  number={2},
  pages={123--140},
  year={1996},
  publisher={Springer}
}

@article{buhlmann2002analyzing,
  title={Analyzing bagging},
  author={B{\"u}hlmann, Peter and Yu, Bin},
  journal={The annals of Statistics},
  volume={30},
  number={4},
  pages={927--961},
  year={2002},
  publisher={Institute of Mathematical Statistics}
}

@inproceedings{oza2001online,
  title={Online bagging and boosting},
  author={Oza, Nikunj C and Russell, Stuart J},
  booktitle={International workshop on artificial intelligence and statistics},
  pages={229--236},
  year={2001},
  organization={PMLR}
}

@article{wu2025ensemble,
  title={Ensemble linear interpolators: The role of ensembling},
  author={Wu, Mingqi and Sun, Qiang},
  journal={SIAM Journal on Mathematics of Data Science},
  volume={7},
  number={2},
  pages={438--467},
  year={2025},
  publisher={SIAM}
}

@inproceedings{ashtiani2023adversarially,
  title={Adversarially robust learning with tolerance},
  author={Ashtiani, Hassan and Pathak, Vinayak and Urner, Ruth},
  booktitle={International Conference on Algorithmic Learning Theory},
  pages={115--135},
  year={2023},
  organization={PMLR}
}

@inproceedings{bhattacharjee2023robust,
  title={Robust empirical risk minimization with tolerance},
  author={Bhattacharjee, Robi and Hopkins, Max and Kumar, Akash and Yu, Hantao and Chaudhuri, Kamalika},
  booktitle={International Conference on Algorithmic Learning Theory},
  pages={182--203},
  year={2023},
  organization={PMLR}
}

@article{raman2023proper,
  title={On proper learnability between average-and worst-case robustness},
  author={Raman, Vinod and Subedi, Unique and Tewari, Ambuj},
  journal={Advances in Neural Information Processing Systems},
  volume={36},
  pages={13115--13126},
  year={2023}
}

@article{ashtiani2025simplifying,
  title={Simplifying Adversarially Robust PAC Learning with Tolerance},
  author={Ashtiani, Hassan and Pathak, Vinayak and Urner, Ruth},
  journal={arXiv preprint arXiv:2502.07232},
  year={2025}
}

@inproceedings{bubeck2021law,
  title     = {A Law of Robustness for Two-Layers Neural Networks},
  author    = {Bubeck, Sebastien and Li, Yuanzhi and Nagaraj, Dheeraj M.},
  booktitle = {Proceedings of the Thirty-Fourth Conference on Learning Theory},
  editor    = {Belkin, Mikhail and Kpotufe, Samory},
  series    = {Proceedings of Machine Learning Research},
  volume    = {134},
  pages     = {804--820},
  year      = {2021},
  publisher = {PMLR}
}

@inproceedings{bubeck2021universal,
  title     = {A Universal Law of Robustness via Isoperimetry},
  author    = {Bubeck, S{\'e}bastien and Sellke, Mark},
  booktitle = {Advances in Neural Information Processing Systems},
  editor    = {Ranzato, Marc'Aurelio and Beygelzimer, Alina and
               Dauphin, Yann and Liang, Percy and Vaughan, Jennifer Wortman},
  volume    = {34},
  pages     = {28811--28822},
  year      = {2021},
  publisher = {Curran Associates, Inc.}
}

@misc{more2026order,
  title         = {Does Order Matter: Connecting the Law of Robustness to Robust Generalization},
  author        = {More, Mihir and Das, Aritra and Ponde, Jaee and Mandal, Himadri and Varadarajan, Vishnu and Gupta, Debayan},
  year          = {2026},
  eprint        = {2602.20971},
  archivePrefix = {arXiv},
  primaryClass  = {cs.LG},
  doi           = {10.48550/arXiv.2602.20971}
}

@inproceedings{dern2025theoretical,
  title     = {Theoretical Limitations of Ensembles in the Age of
               Overparameterization},
  author    = {Dern, Niclas and Cunningham, John Patrick and Pleiss, Geoff},
  booktitle = {Proceedings of the 42nd International Conference on Machine Learning},
  editor    = {Singh, Aarti and Fazel, Maryam and Hsu, Daniel and
               Lacoste-Julien, Simon and Berkenkamp, Felix and Maharaj, Tegan and
               Wagstaff, Kiri and Zhu, Jerry},
  series    = {Proceedings of Machine Learning Research},
  volume    = {267},
  pages     = {13369--13401},
  year      = {2025},
  publisher = {PMLR}
}
